\newtheorem{theorem}{Theorem}
\newtheorem{lemma}{Lemma}
\newtheorem{definition}{Definition}
\def\eqref#1{equation~\ref{#1}}
\def\1{\bm{1}}
\def\vmu{{\bm{\mu}}}
\def\vk{{\bm{k}}}
\def\vp{{\bm{p}}}
\def\vr{{\bm{r}}}
\def\vx{{\bm{x}}}
\DeclareMathAlphabet{\mathsfit}{\encodingdefault}{\sfdefault}{m}{sl}
\SetMathAlphabet{\mathsfit}{bold}{\encodingdefault}{\sfdefault}{bx}{n}
\def\data{{\cal{D}}}
\def\Ndata{{{\rm{N}}_{\cal{D}}}}
\def\reference{{\cal{R}}}
\def\Nreference{{\rm{N}}_{\cal{R}}}
\def\Ndata{{\rm{N}}_{\cal{D}}}
\newcommand\Hnull{{\rm{H_{\boldsymbol0}}}}
\def\beq{\begin{equation}\displaystyle}
\def\eeq{\end{equation}}
\def\bea{\begin{eqnarray}\displaystyle} 
\def\eea{\end{eqnarray}}
\def\({\left(}
\def\){\right)}
\def\[{\left[}
\def\]{\right]}
\begin{document}
\title{Sparse, self-organizing ensembles of local kernels detect rare statistical anomalies}

\author{Gaia Grosso$^{a,b,c}$,
Sai Sumedh R. Hindupur$^{c}$,
Thomas Fel$^{c,d}$,\\
Samuel Bright-Thonney$^{a,b}$,
Philip Harris$^{a,b}$,
and Demba Ba$^{c,d}$
\thanks{}
\thanks{$^a$NSF AI Institute for Artificial Intelligence and Fundamental Interactions, Cambridge, 02139, MA}
\thanks{$^b$Laboratory for Nuclear Science, Massachusetts Institute of Technology, Cambridge, 02139, MA}%
\thanks{$^c$School of Engineering and Applied Sciences, Harvard University, Allston, 02134, MA}
\thanks{$^d$Kempner Institute for the Study of Natural and Artificial Intelligence, Harvard University, Allston, 02134, MA}
}
\maketitle

\begin{abstract}
Modern artificial intelligence has revolutionized our ability to extract rich and versatile data representations across scientific disciplines. Yet, the statistical properties of these representations remain poorly controlled, causing misspecified anomaly detection (AD) methods to falter. Weak or rare signals can remain hidden within the apparent regularity of normal data, creating a gap in our ability to detect and interpret anomalies.
We examine this gap and identify a set of structural desiderata for detection methods operating under minimal prior information: \textit{sparsity}, to enforce parsimony; \textit{locality}, to preserve geometric sensitivity; and \textit{competition}, to promote efficient allocation of model capacity. These principles define a class of self-organizing local kernels that adaptively partition the representation space around regions of statistical imbalance. As an instantiation of these principles, we introduce \textsc{SparKer}, a sparse ensemble of Gaussian kernels trained within a semi-supervised Neyman--Pearson framework to locally model the likelihood ratio between a sample that may contain anomalies and a nominal, anomaly-free reference.
We provide theoretical insights into the mechanisms that drive detection and self-organization in the proposed model, 
and demonstrate the effectiveness of this approach on realistic high-dimensional problems of scientific discovery, open-world novelty detection, intrusion detection, and generative-model validation.
Our applications span both the natural- and computer-science domains. 
We demonstrate that ensembles containing only a handful of kernels can identify statistically significant anomalous locations within representation spaces of thousands of dimensions, underscoring both the interpretability, efficiency and scalability of the proposed approach.
\end{abstract}

\begin{IEEEkeywords}
anomaly detection, interpretability, likelihood-ratio, goodness-of-fit
\end{IEEEkeywords}

\maketitle
\section{Introduction}
Finding data patterns that do not conform to nominal behavior, a practice that goes under the name of anomaly detection (AD)~\cite{10.1145/1541880.1541882}, plays a central role in the scientific method, enabling the discovery of novel phenomena and advancing our understanding of the natural world~\cite{lochner2021astronomaly,eze2023anomaly, belis2024machine,ZINGMAN2024103067,campolongo2025building,moro2025multimodal}. 
Based on theories and controlled experiments, scientists construct generative models of observed phenomena, and hope to reveal new laws of nature as unexpected deviations.  
Similarly, AD is at the core of data quality monitoring, validation and control tools, enabling a safe interaction with the myriad of artificial systems that surround us~\cite{patcha2007overview,wu2024physics,wang2025survey}. 
AD also addresses the growing need in computer science for robust methods to test, interpret, and align generative models, a challenge made urgent by the advent of unprecedentedly large models in vision and language~\cite{johnston2025mechanistic,miyai2024generalized,10.1162/coli_a_00429}.

\begin{figure*}[t]
    \centering
    \includegraphics[width=\linewidth]{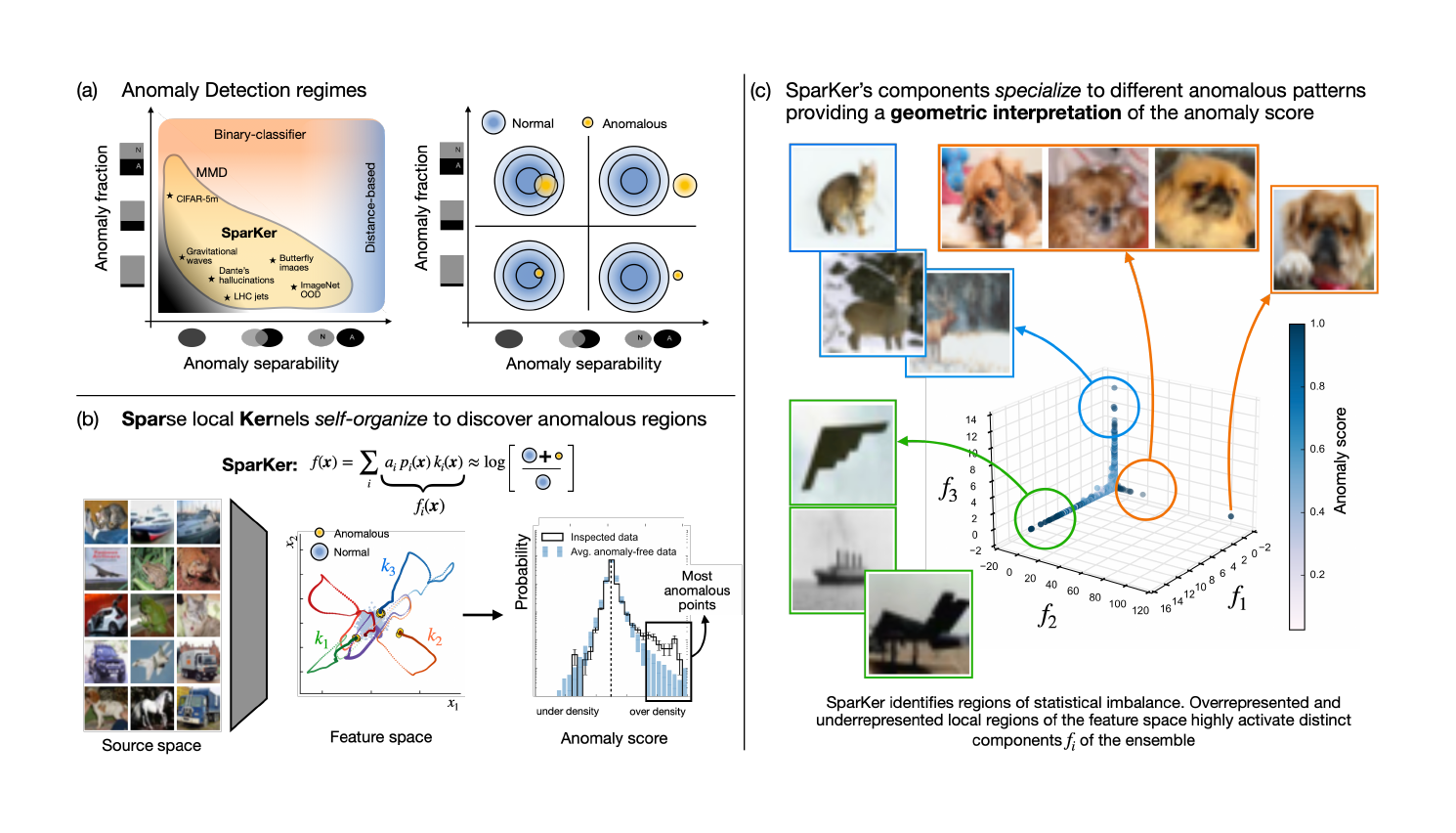}
    \caption{\textbf{Detecting rare in-distribution anomalies with \textsc{SparKer}.} (a) Cartoon diagram illustrating the regimes of anomaly detection; problems are characterized by anomaly separability ($x$-axis) and anomaly fraction ($y$-axis). Detection becomes harder near the origin, where anomalies are rare and overlap with normal data. The right panel qualitatively pictures the different corners of the space. The left panels highlights the region \textsc{SparKer}. is designed for, e.g. low anomaly fraction and separability. The stars indicate the suite of applications from natural and computer science that we showcase in Sections~\ref{sec:applications} and~\ref{sec:interp}. 
    (b) SparKer's anomaly detection pipeline: source data are embedded to extract domain-specific features; \textsc{SparKer}. compares the distribution of the inspected data to anomaly-free data in feature space by means of a self-organizing ensemble of local kernels, and extracts an \textit{anomaly score};
    (c) Sparsity, locality and competition, which underlie the \textsc{SparKer}. model, make it possible to decompose the anomaly score into distinct, interpretable components, $\{f_i\}$, pointing at different anomalous regions in the feature space, thus enabling a geometry-aware analysis of the anomalies. 
    }
    \label{fig:main}
\end{figure*}
AD is a challenging problem for two key reasons: (1) the variety of ways anomalies manifest in data (e.g. in-distribution, out-of-distribution, point-like, collective, ...) and, more often than not, (2) the dearth of information about the time and location of their occurrence~\cite{10.1145/1541880.1541882}.
Several methods exist to tackle different AD settings~\cite{markou2003novelty}.
Broadly speaking, if
the fraction of anomalous points in a dataset is large, key properties about the signal can be learned from the data by means of arbitrarily flexible models, whereas in anomaly-scarce regimes the lack of evidence must be replaced by inductive bias~\cite{yu-pnas}. Feature engineering, geometry, smoothness are just a few examples of design choices that implicitly shape models for AD.
The performance of AD tools is thus tightly intertwined with the correctness of the modeling assumptions they rely on~\cite{hindupur2025projecting, Grosso:2024wjt}.

Concurrent with the growing need for robust AD tools, advances in representation learning have revolutionized data analysis across scientific and technological domains, 
enabling scalable statistical inference on massive datasets and complex high-dimensional problems~\cite{bronstein2021geometric,oord2018representation,wang2021desiderata,phlab}.

In spite of this, the statistical properties of neural embeddings remain poorly controlled, exposing downstream AD to failure modes when the anomalies do not behave as expected. 
As depicted in panel (a) of Figure~\ref{fig:main}, most contemporary AD methods rely on assumptions of separability or dense supervision that break down when anomalies are rare, weakly expressed, or embedded  within the data manifold~\cite{liu2020simple, reiss2023no, fort2021exploring, li2025out}, leaving this corner of the anomaly space unexplored. 

We address this gap by introducing an AD method for \textit{in-distribution} anomalies, arising as subtle over- or under-densities that locally distort the nominal data distribution. Motivated by applications across natural and computer sciences, we propose three principled desiderata for AD algorithms —\textit{sparsity}, \textit{locality}, and \textit{competition}—as fundamental to discovering and characterizing novel patterns in latent representations. These principles motivate the design of efficient ensembles of local kernels that self-organize around anomalous regions in high-dimensional spaces while retaining interpretability. As a concrete realization, we introduce SparKer, a sparse ensemble of Gaussian kernels equipped with a competition mechanism. Panel~(b) of Figure~\ref{fig:main} shows how \textsc{SparKer}. operates in feature space. SparKer's self-organizing Gaussian kernels model the log-density ratio between the inspected data (a mixture of anomalous and normal occurrences) and an anomaly-free reference set, yielding an interpretable anomaly score.

We demonstrate the effectiveness of this framework across diverse applications—from scientific discovery to generative model validation, novelty detection, and intrusion detection—spanning multiple data modalities, dimensionalities, and scientific domains. Remarkably, ensembles containing only a handful of kernels can identify statistically significant anomalous regions within representation spaces of thousands of dimensions. Panel~(c) of Figure~\ref{fig:main} illustrates SparKer’s ability to characterize anomalous regions: the sparse kernel components provide low-dimensional representations that isolate the most anomalous regions of the data (dark blue points), revealing distinct underlying anomaly types.

Taken together, our findings demonstrate that \textit{sparsity}, \textit{locality}, and \textit{competition} are key properties for building \textit{scalable} and \textit{interpretable} AD models, opening new frontiers in scientific discovery, monitoring, and validation of natural and artificial systems.\\

The rest of our treatment begins in  Section~\ref{sec:design}, where we introduce \textsc{SparKer}. and the design principles behind it. In Section~\ref{sec:theory}, we analyze the detection properties and training dynamics of the model in a teacher/student setting, providing theoretical and numerical insights in support of its design principles.
In Section~\ref{sec:applications}, we show the performance of \textsc{SparKer}. on AD problems spanning scientific discovery, generative AI validation, novelty and intrusion detection. In Section~\ref{sec:interp} we provide guidelines for interpreting the detected anomalies, and discuss domain-specific, actionable findings from the various applications considered in Section~\ref{sec:applications}. Finally, in Section~\ref{sec:conclusion} we conclude and suggest future research directions.
\section{Sparse ensembles of local kernels (SparKer)}\label{sec:design}
\subsection{Anomaly detection regimes}
In AD, different problem settings require different assumptions on the anomaly~\cite{blanchard2010semi}. In the absence of a priori knowledge about the signal, two parameters strongly characterize AD problems: (1) the amount of information about the signal hidden in the inspected data (i.e. anomaly fraction), and (2) how difficult it is to distinguish between anomalous and normal data in representation space (i.e. anomaly separability)\footnote{These properties can only be assessed in presence of ground truth knowledge about the anomaly. The anomaly fraction can be quantified as the ratio between the number of anomalous points and the total number of points in the dataset. The anomaly separability can be quantified as the classification accuracy of a supervised binary classifier between anomalous and normal data points.}. 
The former shapes the balance between expressivity and inductive bias in model design; the latter guides the choice of the statistical method to employ.
Panel (a) of Figure~\ref{fig:main} gives a qualitative view of the most appropriate AD methods given the problem setting: for large anomaly fractions, binary classifiers with standard methods of regularization are the most practical way to learn a density ratio and solve the AD problem~\cite{steinwart2005classification, Lopez-PazO17,cheng2022classification}; on the other hand, if the signal is known to lay in the extreme tails of the distribution of anomaly-free data, then distance-based methods, neglecting distribution shape, are more suitable candidates~\cite{lee2018simple,chalapathy2018anomaly,pmlr-v80-ruff18a}.
We focus on the challenging scenario where the anomaly fraction is low and separability is not guaranteed (bottom left corner of the diagram), and show that a Neyman-Pearson approach to hypothesis testing~\cite{Grosso:2023scl} coupled with a proper set of inductive biases lead to sensitivity in these regimes. 

\subsection{Neyman-Pearson two-sample test}
The Neyman-Pearson (NP) test addresses the problem of statistical anomaly detection by performing a signal-agnostic two-sample test based on the ratio of likelihoods~\cite{blanchard2010semi,Grosso:2023scl, DAgnolo:2019vbw,Letizia:2022xbe}. 
Given a dataset $\data$ comprising $\Ndata$ statistically independent unlabeled observations, and a reference sample $\reference$ of size $\Nreference$ distributed according to the nominal data condition $\Hnull$, we quantify the degree of compatibility between the data and the assumed reference model with a likelihood-ratio test
\begin{equation}\label{eq:NP-test}
    t_{\rm NP}(\data) = 2 \max\limits_\theta \left[ 
    \log\frac{{\cal L}(\data|{{\rm H}_\theta})}{{\cal L}(\data|\Hnull)}\right] \,,
\end{equation}
where the data distribution under the alternative hypothesis is parametrized by a family of models ${\cal F}=\{ f_\theta ,\, \theta\in \Theta \}$, with unknown parameters $\theta$ to estimate from the data
\begin{equation}\label{eq:n_theta}
    n(\vx|{\rm H}_\theta)=n(\vx|{\rm H_0})e^{f_\theta(\vx)}.
\end{equation}
For extended likelihoods~\cite{barlow1990extended}, the maximum likelihood problem becomes equivalent to minimizing the custom loss function~\cite{DAgnolo:2018cun}
\begin{equation}\label{eq:L_nplm}
L_{\rm NP}[f_{\theta}]= \sum\limits_{\vx\in \reference}w_\reference(e^{f_\theta(\vx)}-1) -\sum\limits_{\vx\in\data} f_\theta(\vx)\,,
\end{equation}
with $w_\reference$ balancing for the different samples size.
The test statistic is then readily extracted from the loss function as
\begin{equation}\label{eq:t_nplm}
t_{\rm NP}[f_{\theta}] = -2\cdot L_{\rm NP}[f_{\theta}].
\end{equation}
Recent studies comparing the NP test with alternative two-sample tests have found the NP test more sensitive and universally more robust to various source of anomalies in the data, especially in challenging detection regimes~\cite{Grosso:2023scl, Grosso:2023ltd,Grossi:2024axb}.

\paragraph{Related work} 
Statistical anomaly detection addresses the problem of identifying significant distributional deviations from a reference probabilistic model describing the data-generating process under nominal conditions. When the reference model must be inferred from data rather than specified a priori, the problem is naturally formulated as a \textit{two-sample test}.
An extensive body of literature on traditional statistical methods exists to address univariate setups (the most known tests are Kolmogorov-Smirnov, Anderson-Darling, Cramer-von-Mises, $\chi^2$), some of which can be extended to few dimensions under specific assumptions~\cite{friedman1979multivariate,9f32b19b-e81f-3570-ac48-6b7df97389f7}.
Kernel-based two-sample tests relying on the maximum mean discrepancy metric~\cite{JMLR:v13:gretton12a} provide a powerful solution to high-dimensional problems, despite the computational cost on large samples~\cite{chatalic2025efficient,gerber2023kernel}.
More recent strategies leverage deep neural networks to learn discriminative structures: a binary classifier is trained between samples, and its performance metrics or score distributions are used as test statistics~\cite{Lopez-PazO17,Chakravarti:2021svb,Friedman:2003id}. While effective with abundant anomalies, these approaches loose performance in regimes of low anomaly fraction.

\subsection{Desiderata of detection methods in challenging regimes}
The power of the NP test depends on the design and optimization of the family of functions $\cal F$ parametrizing the density-ratio.
In the challenging regimes of low-anomaly fraction and separability, we argue that locality, sparsity and competition are a powerful set of inductive biases for sensitive and interpretable detection based on the NP test. 

\begin{wrapfigure}{r}{0.5\textwidth}
    \centering \includegraphics[width=\linewidth]{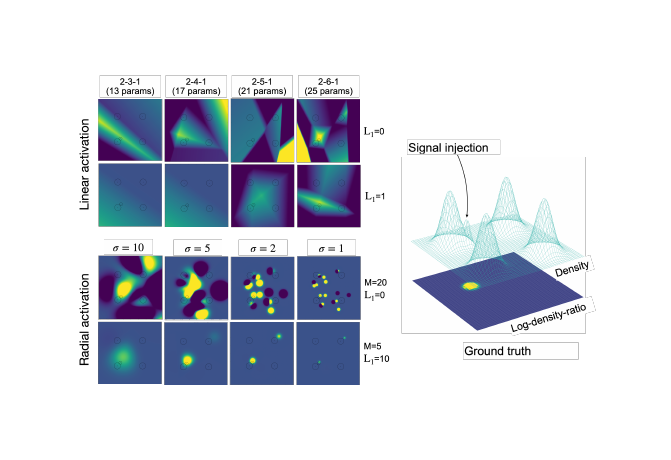}
    \caption{\textbf{Local layers enable geometric interpretation of activation patterns.} Examples of 2D log-density ratio approximated by a 1-layer model with ReLU activation (top left grid) and mixture of Gaussian kernels (bottom left grid) subject to different sparsity constraints. The local activation characterizing Gaussian kernels enables efficient selection of convex regions, whereas linear activation requires the interplay of multiple parameters, undermining interpretability. We report on the right side the ground truth of the log-density ratio function targeted by the model, as well as the density of the sample with injected anomalies.}\label{fig:2d_act}
\end{wrapfigure}
\paragraph{The case for locality}
In structured data representations, it is reasonable to expect anomalies to emerge as local patterns.
The vast majority of current state-of-the-art deep learning models build upon computational units based non-local activations such as ReLU~\cite{agarap2018deep}.
Despite their numerous benefits, non-local activations require several degrees of freedom to build localized activations, making it hard to disentangle the role of different parameters in the model's response to a specific input data point~\cite{liu2020simple}.
Conversely, local activations are naturally endowed with this feature~\cite{fisch2010versatility, lau2023revisiting}. An illustration of this phenomenon is presented in Figure~\ref{fig:2d_act}, where a radial-based activation (e.g. Gaussian activation) is compared to linear activation (e.g. ReLU). Radial, distance-based activations enable a geometric interpretation of the model response, directly associating a Gaussian's location to the activated regions of the data space.

\paragraph{The case for competition and self-organization} 
The lack of information about the location of the anomaly requires learning dynamics that can efficiently explore the feature space, and ultimately localize the anomalous region.
To this end, a relevant property of a good anomaly detection model is the ability to self-organize its degrees of freedom. This can be achieved by instilling \textit{competition} between kernels during the learning stage.
Competition in biological systems induces \textit{specialization} among different agents that result in a more efficient representation of heterogeneous data~\cite{rozell2008sparse}.
Prior studies on likelihood inference with Radial Basis Function Networks (RBFNs)~\cite{NIPS1989_d1c38a09} demonstrated that introducing competition among units, most effectively through ${\rm SoftMax}$-induced soft competition, enhances model performance.

\paragraph{The case for sparsity} Lastly, anomaly detection aims to single out rare patterns in the data while overlooking variability that is typical of the system and manifests as nuisance.
Imposing \textit{sparsity} in the model activation enforces an additional competing mechanism, that acts among the training data points craving for the model's attention.
Sparsity constraints in machine learning and artificial intelligence play a crucial role in enhancing model interpretability, efficiency, and generalization~\cite{JMLR:v22:21-0366}. By enforcing that only a small subset of features, neurons, or basis functions are active at any given time, sparsity mimics the energy-efficient way biological neural networks operate in the context of sensory processing, where neurons in the brain exhibit localized and selective responses to stimuli, reducing redundancy and enhancing discriminative power~\cite{omohundro1987efficient}. 
\subsection{Architecture design}
 To operationalize these desiderata in the NP test we introduce SparKer, a self-organizing-ensemble of Gaussian kernels.
We consider $d$-dimensional Gaussian kernel acting on Euclidean space:

\begin{equation}\label{eq:kernel}
    k^{\sigma}_{\vmu}(\vx) = \exp{\left[-\frac{1}{2}(\vx-\vmu)^{\rm T}\Sigma^{-1} (\vx-\vmu)\right]}
\end{equation}
with location $\vmu=(\mu^1, \dots, \mu^d)\in\mathbb{R}^d $ and uniform diagonal covariance matrix
$\Sigma = \sigma^2 \mathbb{I}^{d\times d}\in \mathbb{R}^{d\times d}$.

Inspired by Radial Basis Function Networks (RBFNs)~\cite{Broomhead1988MultivariableFI,sandberg,poggio,moody1989fast}, we build a sparse ensemble of $M$ Gaussian kernels, with an additional competitive activation promoting the self-organization of the kernels. By denoting the vector of the $M$ kernels embeddings as $\boldsymbol{k}_{\vmu}^{\sigma}(\vx)=(k^{\sigma}_{\vmu_1}(\vx), \dots, k^{\sigma}_{\vmu_M}(\vx))\in \mathbb{R}^M$
and the vector of the kernels' amplitudes as $\boldsymbol{a}=(a_{1},\dots,a_M)\in \mathbb{R}^M$,  we define \textsc{SparKer}. as
\begin{equation}\label{eq:model}
f^{\sigma}_{{M}}(\vx) 
= \boldsymbol{a}^{T} [\vp[\vk_{\vmu}^{\sigma}](\vx) \odot \vk_{\vmu}^{\sigma} (\vx)],
\end{equation}
where $\odot$ denotes the Hadamard product between the vectors.
The functional $\vp: \mathbb{R}^M \rightarrow \mathbb{R}^M$ is an activation function
\begin{equation}
    p^i[\vk_{\vmu}^{\sigma}](\vx)= \frac{k^\sigma_{\vmu_i}(\vx)}{\sum_j k^\sigma_{\vmu_j}(\vx)},
\end{equation}
that is formally analogous to applying a ${\rm SoftMax}$ to the negative of the data-to-locations squared Euclidean distance, with temperature $2\sigma^2$.
The vectors of kernels' locations $\boldsymbol{\mu}$ and amplitudes $\boldsymbol{a}$ are the parameters to estimate, while the kernels' width $\sigma$ is treated as a hyperparameter and is annealed during training, allowing to scan over multiple assumptions of locality. Sparsity is hard coded in the model by keeping the number of kernels $M$ much smaller than the training sample size, and further encouraged by the ${\rm SoftMax}$ activation function. Details on model initialization and hyperparameters selection are available in the Supporting Information of this paper.
\begin{algorithm*}[t]
\caption{\textsc{SparKer}. algorithm}\label{alg:algo}
\begin{algorithmic}[1]
\State \textbf{Calibration Experiments:}
\State Build a set of anomaly-free bootstrapped datasets: $S_0 = \{\{\vx_i\}_{i=0}^{\Ndata} \mid \vx \sim p(x|{\rm H_0})\}$
\ForAll{${\cal D}_0 \in S_0$}
    \State Train $f_M^\sigma$ with linear $\sigma$-annealing on $\reference$ and $\data_0$ to minimize Eq.~\ref{eq:L_nplm}
    \State Store $m$ intermediate model configurations: $(f^{\sigma_1}_M, \dots,f^{\sigma_m}_M) $
    \EndFor
\ForAll{${\cal D}_0 \in S_0$}    
    \ForAll{$\sigma \in [\sigma_1, \dots, \sigma_m]$}
            \State Extract test $t_{\rm NP}[f^\sigma_M](\data_0)$ as in Eq.~\ref{eq:t_nplm}
            \State Compute $p$-value: 
            $ p_{\sigma}(\data_0) = 1 - {\rm EDF}[t_{\rm NP}[f^\sigma_M](\data_0)]$
        \EndFor
    \State Compute the combined test: 
    $p(\data_0) = -\frac{1}{2}\min\limits_{\sigma}[\log p_{\sigma}(\data_0)]-\frac{1}{2m}\sum\limits_\sigma \log p_{\sigma}(\data_0)$
\EndFor
\State Collect all calibration $p$-values: $P_0=\{p(\data_0) \mid \data_0 \in S_0\}$

\Statex

\State \textbf{Anomaly Detection:}
\State Consider the dataset of interest $\data$
\State Train $f_M^\sigma$ with linear $\sigma$-annealing on $\reference$ and $\data$ to minimize Eq.~\ref{eq:L_nplm}
\State Store $m$ intermediate model configurations: $(f^{\sigma_1}_M, \dots,f^{\sigma_m}_M) $
\ForAll{$\sigma \in [\sigma_1, \dots, \sigma_m]$}
        \State Extract test $t_{\rm NP}[f^\sigma_M](\data)$ as in Eq.~\ref{eq:t_nplm}
        \State Compute $p$-value: 
        $p_{\sigma}(\data) = 1 - {\rm EDF}[t_{\rm NP}[f^\sigma_M](\data)]$
    \EndFor
\State Compute the combined test: 
    $p(\data) = -\frac{1}{2}\min\limits_{\sigma}[\log p_{\sigma}(\data)]-\frac{1}{2m}\sum\limits_\sigma \log p_{\sigma}(\data)$

\State Compute global $p$-value:
$P = \frac{\sum_{\data_0 \in S_0} \mathbb{I}[p(\data_0) > p(\data)]}{|S_0|}$

\end{algorithmic}
\end{algorithm*}

\subsection{Testing pipeline}\label{subsec:algo-steps}
In an initial ``calibration" phase, the algorithm is run multiple times with bootstrapped data that follows the anomaly-free distribution. For each trial, we collect the outcome of the test and use the resulting empirical distribution to calibrate the $p$-value, e.g. to asses the rate of false positives.
This phase is blind to the anomaly nature and is kept as generic as possible. The hyper-parameters of the model can be tuned at this stage to improve the shape of the test statistic distribution (details on this aspect of model design can be found in the Supporting Information). The second phase is the actual anomaly detection, in which we run \textsc{SparKer}. on the data of interest and we compute an observed $p$-value exploiting the calibration distribution obtained in the previous stage.
The main steps of the algorithm are summarized in Algorithm~\ref{alg:algo}.
\section{Theoretical analysis of SparKer}\label{sec:theory}
In this section, we analyze the properties and training dynamics of the \textsc{SparKer}. model, providing theoretical insights and simple experiments that clarify the mechanisms that drive its detection performance. Proofs and additional details are reported in the Supporting Information.


\subsection{An energy-based interpretation of SparKer}
The update rules for a generic trainable parameter of the \textsc{SparKer}. model follows gradient descent on the NP loss
\begin{equation}
    \theta_{t+1} -\theta_{t} =
    \Delta\theta_t
    \propto - \left.\frac{\partial_{\theta} L_{\rm NP}[f_{\theta}]}{\partial\theta}\right|_{\theta = \theta_t} ,
\end{equation}
where $L_{\rm NP}$ is the NP objective, whose gradient with respect to a generic parameter $\theta$ reads as
\begin{equation}\label{eq:nablaL}
      \partial_{\theta} L_{\rm NP}[f_{\theta}] = w_{\reference} \sum\limits_{\vx\in \reference} e^{f_{\theta}(x)} \partial_{\theta} f_{\theta}(\vx) - \sum\limits_{\vx\in \data}  \partial_{\theta} f_{\theta}(\vx).
\end{equation}
 
\noindent The action of the two training sets $\data$ and $\reference$ on the gradient of the loss is asymmetric.
To better understand how the asymmetry impacts the gradient dynamics, we study the continuum limit of Eq.~\ref{eq:nablaL}\footnote{Assuming the integral is finite, the approximation in Eq.~\ref{eq:generative} follows from the strong law of large numbers}:
\begin{align}
      \partial_{\theta} L_{\rm NP}[f_{\theta}] 
      \rightarrow
      &\int d\vx\, [ n(\vx|\Hnull) e^{f_{\theta}(\vx)} - n(\vx|\data)]\partial_{\theta} f(\vx)  \nonumber\\
      =& \int d\vx\,  [n(\vx|{\rm H}_{\theta}) - n(\vx|\data)] \partial_{\theta} f_{\theta}(\vx)\nonumber\\
      \approx& 
      \sum\limits_{\vx\in \hat\data} \partial_{\theta} f_{\theta}(\vx) 
      -\sum\limits_{\vx\in \data} \partial_{\theta} f_{\theta}(\vx).\label{eq:generative}
\end{align}
Due to the parametrization in Eq.~\ref{eq:n_theta}, summing the gradients of the model over the dataset $\reference$ weighted by the exponential factor $e^{f_{\theta}(\vx)}$ is equivalent, in the limit of large statistics, to summing the gradients of the model over a dataset $\hat \data$ generated according to the approximate model of the data, $\rm H_{\theta}$, learned during training.
\begin{wrapfigure}[22]{r}{0.5\textwidth}
    \centering
    \includegraphics[width=\linewidth]{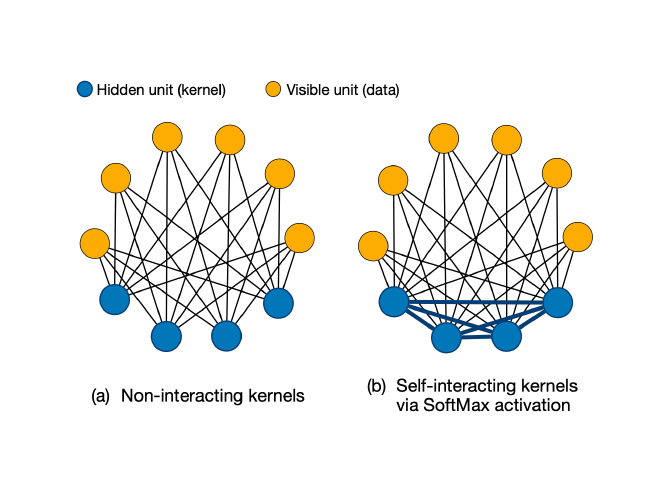}
    \caption{\textbf{Interpreting \textsc{SparKer}. as a system of interacting particles.} The training data (i.e. the visible units) in orange represent the system environment while the kernels (i.e. hidden units) are latent variables of the system, capturing higher order correlations in the data patterns. In the absence of the ${\rm SoftMax}$ activation (left graphics), the hidden units interact with the environment but are not aware of each others. ${\rm SoftMax}$ activation introduces an interaction term between kernels that promotes a joint optimization and units specialization (right graphics). }
    \label{fig:system}
\end{wrapfigure}
The dynamics finds a point of equilibrium when the force resulting from the system of data points $\hat \data$ 
equals the force resulting from the real data $\data$.
In other words, the optimization goal is to find a function $f_{\theta}$ which \textit{generates} a representation of the dataset of interest $\data$ that matches the energy of the system (e.g. has the same distribution). This binary regression dynamic thus implicitly hides an \textit{energy-based model}~\cite{lecun2006tutorial,grathwohl2019your} describing a system of interacting particles. The kernels' locations represent hidden units while the datasets $\data$ and $\reference$ are respectively positive and negative visible units, defining an environment to which the hidden units have to adapt (see Figure~\ref{fig:system}). Regions with an asymmetry between the two samples density drive the dynamics until the model $f_{\theta}$ learns to generate a data density model $n(\vx|{\rm H}_\theta)$ locally equivalent to the data model $n(\vx|\data)$.

\subsection{Learning dynamics as a system of particles}
Eq.~\ref{eq:generative} shows that real and ``generated'' data contribute to the kernels' dynamics proportionally to the gradient of the model evaluated at that point.
For the amplitude of the $i^{th}$ kernel, the contribution of each data point $\vx$ to the gradient of the model takes the form of a scalar field locally active around the kernel's means and vanishing far from it:
\begin{equation}\label{eq:grad_a}
    \partial_{a_i} f(\vx) = p_i[\vk_\vmu^{\boldsymbol{\sigma}}](\vx)\cdot k^\sigma_{\vmu_i}(\vx).
\end{equation}

On the other hand, the gradient of the location of the kernel $\vmu_i$ has the form of a radial vector field
\begin{equation}\label{eq:grad_mu}
    \partial_{\vmu_i} f(\vx) = 
    A_i(\vx) \cdot \vr_i,
\end{equation}
where $\vr_i = \vx -\vmu_i$ is the displacement vector between the kernel location and the data point in Euclidean space, and $A_i(\vx)=\frac{1}{\sigma^2}\cdot \left[2\,a_i k_{\vmu_i}^\sigma(\vx) - f_M^\sigma(\vx)\right] \cdot p^i[\boldsymbol{k_{\mu}^{\sigma}}](\vx)$ is the magnitude. 

The radial nature of the gradient in Eq.~\ref{eq:grad_mu} let us identify ``push-pull'' dynamics between pairs of kernels and training points that, in turn, induce the kernels to self-organize. We formalize this behavior in the following lemma:
\begin{lemma}[Push-pull dynamic of kernels' locations] The dynamics of the $i^{th}$ kernel location $\vmu_i$ is the result of radial forces arising from the training points. Let $y$ be the class label associated with each data point ($y=0$ for $\vx\in \reference$ and  $y=1$ for $\vx \in \data$), and let $m_i(\vx)=2a_i k_i(\vx) - f(\vx)$ be the $i^{th}$ kernel ``mass charge'' in $\vx$. Each interaction is:
\begin{align}
    \text{``radially attractive''} \quad \text{if}\quad m(\vx)\cdot(2y-1)>0 \notag \\
    \text{``radially repulsive''} \quad \text{if}\quad m(\vx)\cdot(2y-1)<0 \notag
\end{align}  
\end{lemma}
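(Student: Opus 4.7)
The plan is to track the sign of each training point's contribution to the gradient-descent update of $\vmu_i$, and read off whether that contribution moves the kernel along $+\vr_i$ (toward $\vx$, attractive) or along $-\vr_i$ (away from $\vx$, repulsive). Concretely, I would start from $\Delta\vmu_i \propto -\partial_{\vmu_i} L_{\rm NP}[f_\theta]$, substitute Eq.~\ref{eq:nablaL} to get
\begin{equation*}
\Delta\vmu_i \propto \sum_{\vx\in\data}\partial_{\vmu_i}f(\vx) - w_{\reference}\sum_{\vx\in\reference}e^{f(\vx)}\partial_{\vmu_i}f(\vx),
\end{equation*}
and then substitute Eq.~\ref{eq:grad_mu} so that each summand is of the form $c(\vx)\,\vr_i$, where $\vr_i = \vx-\vmu_i$ determines the radial direction and $c(\vx)$ carries the sign that discriminates attraction from repulsion.

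The second step is simply sign bookkeeping. The prefactor $A_i(\vx) = \sigma^{-2}\,m_i(\vx)\,p^i[\vk_\vmu^\sigma](\vx)$ satisfies $\operatorname{sign}(A_i(\vx)) = \operatorname{sign}(m_i(\vx))$, since $\sigma^{-2}>0$ and the \textrm{SoftMax}-like coefficient $p^i[\vk_\vmu^\sigma](\vx)\in(0,1)$ is strictly positive. For a point $\vx\in\data$ ($y=1$) the per-point contribution to $\Delta\vmu_i$ is $+A_i(\vx)\,\vr_i$, so it acts along $+\vr_i$ (attractive) iff $m_i(\vx)>0$ and along $-\vr_i$ (repulsive) iff $m_i(\vx)<0$. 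For a point $\vx\in\reference$ ($y=0$), the positive weight $w_{\reference}e^{f(\vx)}$ flips the sign, giving $-w_\reference e^{f(\vx)}A_i(\vx)\,\vr_i$: attractive when $m_i(\vx)<0$, repulsive when $m_i(\vx)>0$.

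Finally I would combine the four cases into a single criterion using the label $y\in\{0,1\}$. In all attractive cases the product $m_i(\vx)(2y-1)$ is positive, while in all repulsive cases it is negative, which is exactly the claim of the lemma. The mapping $y\mapsto 2y-1\in\{-1,+1\}$ makes this collapse transparent.

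The derivation is essentially mechanical: it is a sign computation combined with the radial decomposition already given in Eq.~\ref{eq:grad_mu}. The only point that requires mild care is justifying that $p^i[\vk_\vmu^\sigma](\vx)>0$ for every $\vx$ (which follows because each Gaussian kernel $k^\sigma_{\vmu_j}(\vx)$ is strictly positive, so the \textrm{SoftMax}-style normalization cannot vanish), and that $\sigma>0$ throughout annealing, so dividing by $\sigma^2$ never reverses the sign. Beyond that, no further analytic work is needed.
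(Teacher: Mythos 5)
Your proof is correct and follows essentially the same route as the paper: write $\Delta\vmu_i \propto -\partial_{\vmu_i}L_{\rm NP}$, substitute the per-point radial decomposition $\partial_{\vmu_i}f(\vx) = A_i(\vx)\,\vr_i$ from Eq.~\ref{eq:grad_mu}, and do the sign bookkeeping through the label factor $(2y-1)$. If anything, you are slightly more explicit than the paper in isolating the step $\sign A_i(\vx) = \sign m_i(\vx)$, which rests on $\sigma^{-2}>0$ and the strict positivity of $p^i[\vk_\vmu^\sigma](\vx)$ (and, for the reference term, of $w_\reference e^{f(\vx)}$), all of which the paper leaves implicit.
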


\begin{wrapfigure}[37]{r}{0.5\textwidth}
    \centering
    \includegraphics[width=\linewidth]{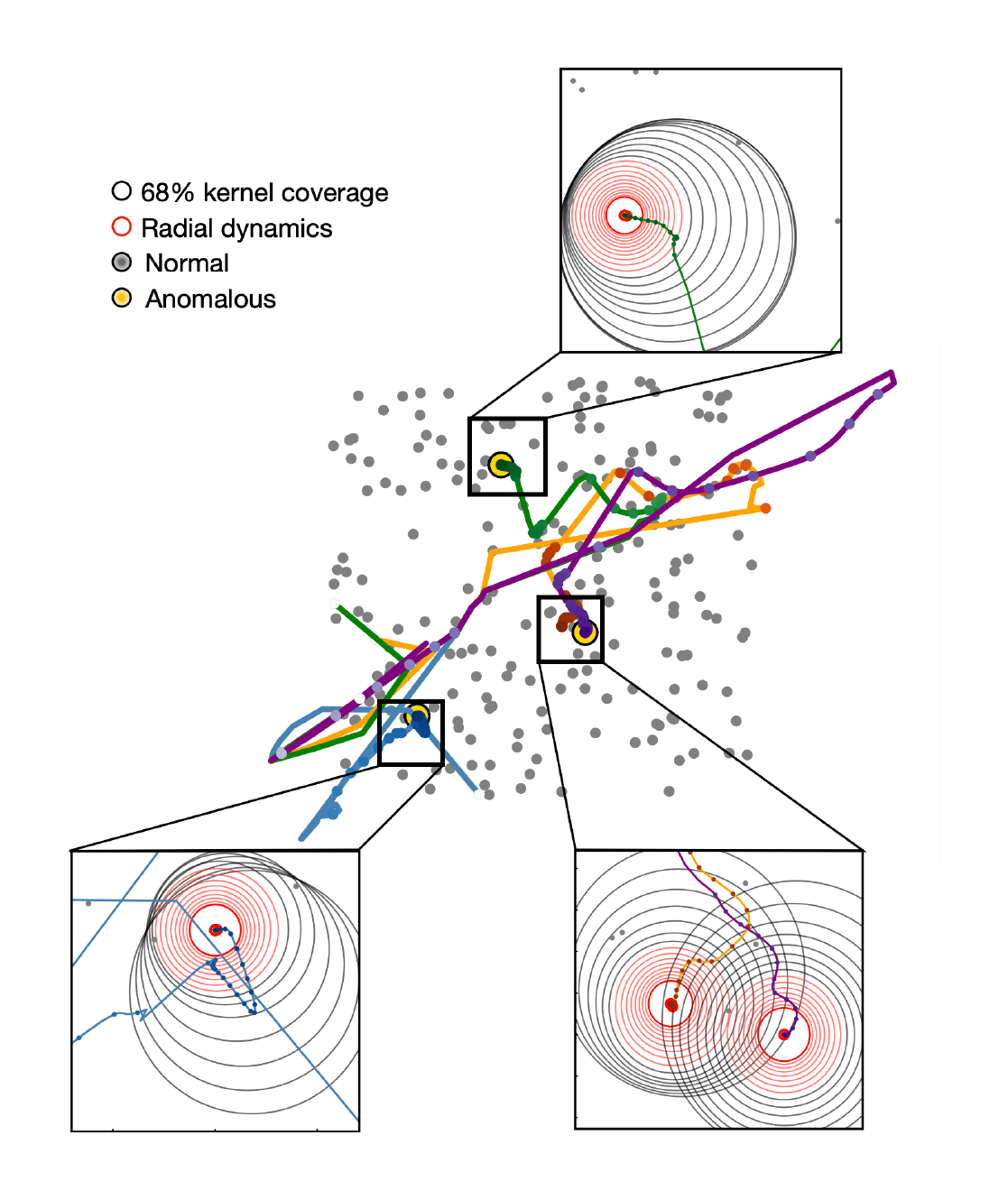}
    \caption{\textbf{Attraction and repulsion govern kernels' learning dynamics.} Example of kernels' trajectories over training time in a 2-dimensional setting where normal data are uniformly distributed in a square (gray points). Three sources of anomalies, highlighted in yellow, are injected in the data. We run a 4-kernels \textsc{SparKer}. model and examine the kernels' self-organization dynamics represented by the colored lines. Two stages can be identified in the dynamics: a first stage in which the kernels are broad and pushed to explore the space, and a second stage in which progressively narrow kernels are radially attracted towards the anomalous regions (see zoomed out panels).}
    \label{fig:push-pull}
\end{wrapfigure}
\paragraph{A scale-dependent, local dynamic} Notably, the gradients in Eq.~\ref{eq:grad_mu} and~\ref{eq:grad_a} are \textit{local} and \textit{scale dependent}, and only data points that are in the neighborhood of a kernel have a leading role in them. 
We formalize this phenomenon by defining an active region for a kernel's dynamics that has radial geometry:
\begin{definition}[Kernel radial $\alpha$-level]
Let $k:\mathbb{R}_{+}\to\mathbb{R}_{+}$ be a continuous, radial kernel profile that is nonincreasing in $r\ge0$ (i.e. $k(r_1)\ge k(r_2)$ for $r_1\le r_2$).  
For a given level $\alpha\in(0,k(0))$ define the \emph{radial $\alpha$-level} $r_\alpha$ as the radius such that $k(r) < \alpha \quad \forall r>r_\alpha$, 
or equivalently $r_\alpha \;=\; \sup\{ r\ge0 \;:\; k(r)\ge\alpha\}$.
\end{definition}
\noindent The radial $\alpha$-level delineates each kernel’s \textit{sphere of influence}, the region where data points meaningfully influence the location $\vmu$ and amplitude $a$ dynamics:

\begin{lemma}[Kernel's Sphere of Influence] 
For each kernel $i$ that makes up the \textsc{SparKer}. model, there exists a value $\alpha\in (0,1)$ such that the associated ball $S_{i}^\sigma (\alpha) = \{\vx \in \mathbb{R}^d: \|\vx-\vmu_i\|\leq q^\sigma_i(\alpha)\}$ contains all the training points that contribute at least $\epsilon_\alpha$ to the kernel's dynamics, i.e., points outside this ball have no more than $\epsilon_\alpha$ effect on the kernel's dynamics. We name this ball the kernel's ``Sphere of Influence'':
\begin{equation}\notag
    \exists\, \epsilon_\alpha>0 \quad s.t. \quad |\partial_{\theta_i}f(\vx)| < \epsilon_\alpha, \quad \forall \vx \notin S_i^{\sigma}(\alpha)
\end{equation}
If $\alpha\ll1$ then $\epsilon_\alpha\ll1$ as well.
In other words, dynamics are local on a scale that is determined by the kernels' width, $\sigma$. 
\end{lemma}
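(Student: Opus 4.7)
The plan is to upper bound both $|\partial_{a_i} f(\vx)|$ and $\|\partial_{\vmu_i} f(\vx)\|$ by a single quantity $\epsilon_\alpha$ that vanishes as $\alpha \to 0$, for every $\vx$ lying outside the Sphere of Influence. The natural choice of radius is the Gaussian's radial $\alpha$-level, $q_i^\sigma(\alpha) = \sigma\sqrt{2\log(1/\alpha)}$, so that by definition $k^\sigma_{\vmu_i}(\vx) < \alpha$ whenever $\vx \notin S_i^\sigma(\alpha)$.

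The amplitude gradient is immediate from Eq.~\ref{eq:grad_a}: since the SoftMax-induced weights satisfy $p^i \in [0,1]$, $|\partial_{a_i} f(\vx)| = p^i(\vx)\,k^\sigma_{\vmu_i}(\vx) \leq k^\sigma_{\vmu_i}(\vx) < \alpha$ outside the sphere. The location gradient is the technical heart of the argument. Starting from Eq.~\ref{eq:grad_mu}, I would first bound the magnitude $|A_i|$ by splitting $|2a_i k_i - f|$ into two pieces. The kernel piece is trivial: $p^i k_i \leq k_i$. For the $f$ piece, I would combine two observations: (i) $|f(\vx)| \leq \|\boldsymbol{a}\|_\infty \sum_j p_j k_j \leq \|\boldsymbol{a}\|_\infty \max_j k_j$ (as a convex combination), and (ii) $p^i \max_j k_j \leq k_i$, since $\sum_j k_j \geq \max_j k_j$ implies $p^i = k_i/\sum_j k_j \leq k_i/\max_j k_j$. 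Together these yield $p^i |f| \leq \|\boldsymbol{a}\|_\infty k_i$ and hence $|A_i(\vx)| \leq \sigma^{-2}(2|a_i| + \|\boldsymbol{a}\|_\infty)\, k_i(\vx)$.

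The main obstacle is now controlling the product $k_i(\vx)\|\vr_i\|$ on the complement of $S_i^\sigma(\alpha)$, since $\|\vr_i\|$ grows unboundedly and a naive estimate $k_i < \alpha$ alone does not suffice. Here I would exploit the Gaussian tail: the scalar map $r \mapsto r\exp(-r^2/(2\sigma^2))$ has derivative $e^{-r^2/(2\sigma^2)}(1 - r^2/\sigma^2)$ and is therefore monotonically nonincreasing on $[\sigma,\infty)$. For $\alpha \leq e^{-1/2}$ one has $r_\alpha \geq \sigma$, so $k_i(\vx)\|\vr_i\|$ attains its supremum on $\{\|\vr_i\|\geq r_\alpha\}$ at the boundary, giving $k_i(\vx)\|\vr_i\| \leq r_\alpha \cdot \alpha = \sigma\,\alpha\,\sqrt{2\log(1/\alpha)}$. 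Combining with the bound on $|A_i|$, $\|\partial_{\vmu_i} f(\vx)\| \leq \sigma^{-1}(2|a_i|+\|\boldsymbol{a}\|_\infty)\,\alpha\sqrt{2\log(1/\alpha)}$ outside the sphere.

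Setting $\epsilon_\alpha = \max\{\alpha,\;\sigma^{-1}(2|a_i|+\|\boldsymbol{a}\|_\infty)\,\alpha\sqrt{2\log(1/\alpha)}\}$ closes the argument. Because $\alpha\sqrt{\log(1/\alpha)} \to 0$ as $\alpha \to 0^+$, we have $\epsilon_\alpha \ll 1$ whenever $\alpha \ll 1$, and the explicit $\sigma^{-1}$ prefactor makes precise the claim that the spatial scale of the dynamics is set by the kernel width $\sigma$. The only subtle point is the tail control that ensures the exponential decay of $k_i$ dominates the linear growth of $\|\vr_i\|$; everything else reduces to elementary manipulations of convex combinations and of the SoftMax weights $p^i$.
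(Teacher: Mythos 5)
Your proof is correct, and it is noticeably cleaner and more complete than the paper's own argument. The two proofs handle the amplitude gradient $\partial_{a_i}f$ identically, but they diverge on the location gradient. The paper splits into two cases---whether $\vx$ lies inside the sphere of influence of some \emph{other} kernel $j$ or not---and in each case derives explicit sandwich bounds on the scalar coefficient $A_i(\vx)$ showing it is $O(\alpha)$. Your approach avoids the case split entirely: you observe that $\sum_j p_j k_j$ is a convex combination of the kernel values, hence $|f| \le \|\va\|_\infty \max_j k_j$, and that $p_i \max_j k_j \le k_i$ since $\sum_l k_l \ge \max_j k_j$, giving the single uniform bound $|A_i| \le \sigma^{-2}(2|a_i| + \|\va\|_\infty)\,k_i$. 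This is tighter and more transparent than the paper's case analysis. More importantly, you address a point the paper's proof actually skips: the gradient is $\partial_{\vmu_i} f = A_i(\vx)\,\vr_i$, so a bound on $|A_i|$ alone does not control $\|\partial_{\vmu_i} f\|$ because $\|\vr_i\|$ is unbounded on the complement of the ball. The paper concludes ``the gradient $\partial_{\vmu_i}f$ is small'' after only establishing that $A_i$ is small, leaving the linear growth of $\|\vr_i\|$ unaccounted for. Your observation that $r \mapsto r\,e^{-r^2/(2\sigma^2)}$ is nonincreasing on $[\sigma,\infty)$, so that $k_i(\vx)\|\vr_i\| \le r_\alpha\,\alpha = \sigma\,\alpha\sqrt{2\log(1/\alpha)}$ whenever $\alpha \le e^{-1/2}$, is precisely the step needed to close this gap, and it also makes the $\sigma$-dependence of the locality scale explicit. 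One minor caveat: your bound involves $\|\va\|_\infty$, which is controlled in practice only because the paper clips the amplitudes; you might want to note that this regularization is what keeps $\epsilon_\alpha$ finite along a training trajectory.
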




\subsection{Explore and exploit via scale annealing}\label{subsec:anneal}
An AD algorithm exhibits strong detection power when it assigns high scores to anomalous regions while maintaining low scores across the rest of the data distribution. This requires that at least one kernel of SparKer's identifies the anomalous region, and that its resolution (i.e., the kernel width $\sigma$) is sufficiently fine to distinguish it from the background.
Hence, discovering anomalous regions requires the kernels to simultaneously explore the data space, and resolve the anomaly.
However, due to the local nature of the Sphere of Influence, exploration is only possible with broad kernels, while resolution requires narrow ones.
We address this challenge using deterministic annealing over $\sigma$~\cite{moody1988fast}, analogous to lowering the temperature of the system of particles. At high temperatures (large $\sigma$), kernels behave like diffusive particles that explore the space broadly, avoiding poor local minima. As the temperature decreases (smaller $\sigma$), their interactions become more localized, allowing the ensemble to condense and specialize around regions of anomalous density. 
\begin{lemma}[Scale annealing leads to localization] Monotonically annealing the kernel scale $\sigma$ shrinks the size of the Sphere of Influence: 
$$\sigma_t < \sigma_{t-1} \implies \text{Vol}\; S^{\sigma_t}_i(\alpha) < \text{Vol}\; S^{\sigma_{t-1}}_i (\alpha),$$
resulting in a decrease in the number of data points that influence the dynamics of each kernel $i$.
\end{lemma}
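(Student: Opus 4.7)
The plan is to reduce the statement to a direct computation of the radial $\alpha$-level $q_i^\sigma(\alpha)$ for the Gaussian profile and then invoke monotonicity in $\sigma$, followed by a trivial set-containment argument for the data points.

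\textbf{Step 1: explicit form of the radial $\alpha$-level.} Since the Gaussian profile $k(r)=\exp(-r^2/2\sigma^2)$ is continuous and strictly decreasing on $[0,\infty)$, the condition $k(r)\ge\alpha$ can be inverted exactly. Solving $\exp(-r^2/2\sigma^2)=\alpha$ yields
\begin{equation}\notag
q_i^\sigma(\alpha) \;=\; \sigma\sqrt{-2\log\alpha},
\end{equation}
which is valid for every $\alpha\in(0,1)$. This identity already reveals the key structural fact: the radius of the sphere of influence factorizes as a product of $\sigma$ and a quantity depending only on the chosen level $\alpha$.

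\textbf{Step 2: monotonicity of the volume.} Since $\sqrt{-2\log\alpha}>0$ is independent of $\sigma$, the map $\sigma\mapsto q_i^\sigma(\alpha)$ is strictly increasing. The Euclidean volume of a $d$-ball is $\mathrm{Vol}\,S_i^\sigma(\alpha)=\omega_d\,[q_i^\sigma(\alpha)]^d=\omega_d\,(-2\log\alpha)^{d/2}\sigma^d$, with $\omega_d$ the volume of the unit ball. Hence $\sigma_t<\sigma_{t-1}$ immediately implies $\mathrm{Vol}\,S_i^{\sigma_t}(\alpha)<\mathrm{Vol}\,S_i^{\sigma_{t-1}}(\alpha)$, which is the first part of the claim.

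\textbf{Step 3: reduction of ``number of influential data points'' to set containment.} The strict monotonicity of $q_i^\sigma(\alpha)$ in $\sigma$ implies the nested inclusion $S_i^{\sigma_t}(\alpha)\subset S_i^{\sigma_{t-1}}(\alpha)$. By the previous lemma (Kernel's Sphere of Influence), the training points that contribute more than $\epsilon_\alpha$ to the dynamics of kernel $i$ all lie inside its sphere of influence. Therefore the set of influential points at scale $\sigma_t$ is contained in the set of influential points at scale $\sigma_{t-1}$, so its cardinality cannot be larger, and is strictly smaller whenever at least one training point lies in the annular region $S_i^{\sigma_{t-1}}(\alpha)\setminus S_i^{\sigma_t}(\alpha)$.

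\textbf{Anticipated obstacle.} There is no genuinely hard technical step: the Gaussian profile is analytically invertible and the volume formula for a Euclidean ball is elementary. The only subtlety worth being careful about is logical: the phrase ``number of data points that influence the dynamics'' should be interpreted at a fixed level $\alpha$, otherwise one could trivially compensate a smaller $\sigma$ by a smaller $\alpha$. Once $\alpha$ is held fixed, the argument is a two-line chain — explicit inversion of the Gaussian, factorization of $q_i^\sigma(\alpha)$ as $\sigma\cdot c(\alpha)$, and a set-containment remark — so the writeup can remain compact.
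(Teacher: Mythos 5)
Your proposal is correct and follows essentially the same approach as the paper, which simply notes that the radius of $S_i^\sigma(\alpha)$ is proportional to $\sigma$ (the paper itself records the closed form $r_\alpha=\sigma\sqrt{-2\ln\alpha}$ just after Definition~1) and hence the ball shrinks as $\sigma$ decreases. Your write-up is merely more explicit — spelling out the $d$-ball volume and the nested set-containment for the influential points, and correctly flagging that the count is only strictly smaller when a point actually falls in the annulus — but there is no substantive divergence from the paper's argument.
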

\noindent Gradually annealing $\sigma$ leads to regimes where, eventually, only one data point remains in the sphere of influence. In this setting, \textsc{SparKer}. provably converges to that point within the Sphere of Influence:
\begin{theorem}[Local convergence]
    If $S_i^{\sigma_t}$ contains only one data point $\vx^*$, then $\vmu_i$ converges radially towards $\vx^*$, getting $\epsilon$-close to $\vx^*$ in $O(\log 1/\epsilon)$ further steps.
\end{theorem}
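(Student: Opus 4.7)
The plan is to reduce the multi-point gradient dynamics of $\vmu_i$ to an effective one-body problem, and then to recognize the resulting update as a contraction map with fixed point $\vx^*$. First I would start from the gradient of $L_{\rm NP}$ with respect to the location of the $i^{\rm th}$ kernel, which by Eq.~\ref{eq:nablaL} combined with Eq.~\ref{eq:grad_mu} reads
\begin{equation}
-\partial_{\vmu_i} L_{\rm NP} \;=\; \sum_{\vx\in\data} A_i(\vx)\,(\vx-\vmu_i) \;-\; w_\reference\!\!\sum_{\vx\in\reference} e^{f(\vx)} A_i(\vx)\,(\vx-\vmu_i).
\end{equation}
By the Sphere of Influence Lemma, every $\vx\notin S_i^{\sigma_t}(\alpha)$ contributes at most $\epsilon_\alpha$ in magnitude to $\partial_{\vmu_i} f$, so the sums above can be split into the contribution of the unique point $\vx^*\in S_i^{\sigma_t}$ plus a residual vector of norm $O(\epsilon_\alpha\cdot N)$ where $N=\Ndata+\Nreference$. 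By shrinking $\alpha$ (equivalently $\sigma_t$) this residual can be made negligible against the $\vx^*$ term.

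Second, I would observe that the surviving term is strictly radial: it is a scalar multiple of $\vx^*-\vmu_i$, with coefficient
\begin{equation}
c_t \;=\; s\,A_i(\vx^*), \qquad s \;=\; \begin{cases} +1 & \vx^*\in\data, \\ -w_\reference e^{f(\vx^*)} & \vx^*\in\reference. \end{cases}
\end{equation}
Hence the gradient step takes the form
\begin{equation}
\vmu_i^{t+1} \;=\; \vmu_i^t + \eta\, c_t\,(\vx^*-\vmu_i^t) + O(\eta\,\epsilon_\alpha).
\end{equation}
Setting the error $\vec{e}_t = \vx^*-\vmu_i^t$ yields $\vec{e}_{t+1} = (1-\eta c_t)\,\vec{e}_t + O(\eta\,\epsilon_\alpha)$. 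As long as the step size is chosen so that $\rho := \sup_t |1-\eta c_t|<1$, this is a contraction, and iterating gives $\|\vec{e}_t\| \leq \rho^t\|\vec{e}_0\| + O(\epsilon_\alpha)/(1-\rho)$. Demanding $\|\vec{e}_t\|\leq\epsilon$ therefore requires $t \geq \log(\|\vec{e}_0\|/\epsilon)/\log(1/\rho)$, i.e. $O(\log 1/\epsilon)$ steps, and the trajectory is radial because every update is collinear with $\vx^*-\vmu_i^t$ up to the controlled residual.

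The main obstacle is showing that the coefficient $c_t$ remains bounded away from zero and keeps the correct sign throughout the iteration. Explicitly,
\begin{equation}
A_i(\vx^*) \;=\; \frac{1}{\sigma^2}\bigl[2a_i\,k^\sigma_{\vmu_i}(\vx^*)-f(\vx^*)\bigr]\,p^i[\vk_\vmu^\sigma](\vx^*),
\end{equation}
which depends on the moving parameter $\vmu_i$ (through $k^\sigma_{\vmu_i}(\vx^*)$ and the softmax weight $p^i$) and on the slow-varying parameters $a_i$ and the other kernels. I would argue that once $\vmu_i$ lies in $S_i^{\sigma_t}$ and the other kernels sit outside, $k^\sigma_{\vmu_i}(\vx^*)\to 1$ and $p^i(\vx^*)\to 1$, so $A_i(\vx^*)\to a_i/\sigma^2$; consequently, provided $a_i$ has the sign matching $s$ (the "attractive" regime identified in the push-pull Lemma) and stays bounded on the fast timescale over which $\vmu_i$ moves, $c_t$ can be uniformly bounded in an interval $[c_{\min},c_{\max}]\subset(0,\infty)$. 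Choosing $\eta$ accordingly closes the contraction argument; the residual $O(\epsilon_\alpha)$ sets a noise floor, but it is dominated by $\epsilon$ whenever $\sigma_t$ is annealed sufficiently small, which is consistent with the regime assumed by the theorem.
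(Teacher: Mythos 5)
Your proposal follows the same route as the paper's proof: once $\vx^*$ is the only point in the sphere of influence and the softmax collapses to $p_i\approx 1$, the gradient step on $\vmu_i$ reduces to a scalar multiple of $\vx^*-\vmu_i$, and the resulting error recursion is a contraction whose geometric decay gives the $O(\log 1/\epsilon)$ step count. The paper simply makes your abstract factor $|1-\eta c_t|$ concrete by restricting to $\vx^*\in\data$, writing $\delta_{t+1}\approx\delta_t-\eta\frac{\delta_t}{\sigma^2}e^{-\delta_t^2/(2\sigma^2)}$, and Taylor-expanding the exponential under $\delta_t/\sigma\ll1$ to obtain the contraction rate $1-\eta/\sigma^2$; your extra bookkeeping of the $O(\epsilon_\alpha)$ noise floor and the $\vx^*\in\reference$ case are refinements rather than a different argument.
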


\noindent Empirically, we observe that the combination of scale-annealing and training objective leads the kernels to statistically imbalanced (e.g. anomalous) regions of the data space.
Figure~\ref{fig:push-pull} illustrates the dynamics of kernels' locations for a two-dimensional toy problem, where a 4-kernel \textsc{SparKer}. model is asked to fit three anomalous regions, highlighted in yellow. The four sequences of colored dots represent the four kernels' trajectories during training. 
Early in the learning process, broad kernels encourage exploration by spreading across the data space. As training progresses, the dynamics shift toward refinement, with narrower kernels specializing on distinct regions and capturing fine-grained structure.
After some time $t$, the kernels enter a phase of radial convergence (highlighted in red in the figure zoom out) towards the only active point left in their Sphere of Influence.


To further demonstrate the detection power enabled by scale annealing, we perform a 1-dimensional toy experiment.
Within a teacher–student framework~\cite{freeman1997dynamics}, the data are assumed to be generated by an underlying teacher model, while the student model, here represented by SparKer, attempts to infer its structure through optimization. 

The data distribution in normal conditions is the mixture of two Gaussian models
with unitary scale 
and locations $c_{\pm}=\pm3$, symmetric with respect to the origin.
The signal consists of one anomalous point located at $x^*=-10$. 
We consider the limit of infinite statistics, e.g. $N\gg 1$, where statistical fluctuations are negligible, and we train a 1-kernel model either without annealing, selecting few different choices of the kernel's width ($\sigma=5, 1, 0.5, 0.05$), or with linear annealing.
For a fair comparison, all configurations start from the same initialization. Figure~\ref{fig:annealing} shows the evolution of the kernel's distance to the anomaly as a function of training time for the various width choices. For a finite number of epochs, an accurate localization of the anomaly only occurs for an appropriate choice of the scale (e.g. $\sigma=1$) or using annealing. Kernels that are too broad fail at localizing the anomaly, while ones that are too narrow evolve too slowly due to their gradient vanishing far from the anomaly. Annealing makes it possible to both locate the anomaly and arbitrarily increase the detection resolution by gradually reducing the width.

\subsection{Competition between kernels, interaction and self-organization}
While annealing provides an elegant solution to efficient exploration, it alone does not take advantage of the collective properties of the ensemble. 
The specialization of the kernels is further enhanced by competitive nature of the $\rm SoftMax$ activation. $\rm SoftMax$ promotes competition because its output must sum to one. Thus, SparKer's gradients with respect to a generic kernel's location $\mu_i$ are altered by the presence of nearby kernels. 
Nearby kernels with the same sign that are simultaneously active in a region of the input space compete with each other to represent it. The kernel with the largest activation repels the others and wins the competition. These dynamics cause a spontaneous self-organization of the kernels in the space, which privileges low entropy configurations. 
\begin{wrapfigure}[20]{r}{0.5\textwidth}
\centering
\includegraphics[width=\linewidth]{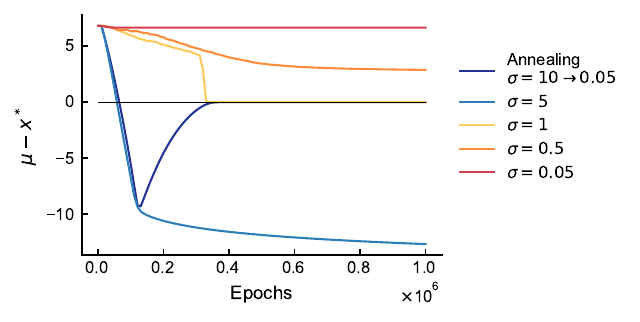}
\caption{\textbf{Scale annealing overcomes the vanishing gradient leading to convergence.} Kernel distance from anomalous point as a function of training time using linear annealing (blue line) or a fixed value of the kernel width (other colors). Kernels that are too wide or too narrow either do not localize the signal or do not converge in a reasonable amount of time. With $\sigma=1$, the model localizes the signal. With annealing, the model localizes the signal with progressively higher resolution.}\label{fig:annealing}
\end{wrapfigure}
This is a desirable property of the model as it promotes specialization of the kernels beyond exploration, making the model more interpretable. At the same time, it leads to higher chance of detecting multiple signals, as well as avoiding local minima arising due to statistical fluctuations.

To demonstrate the impact of the $\rm Softmax$ activation on the detection abilities of the model, we study a one-dimensional toy model where the data distribution in normal conditions is the mixture of two Gaussian models defined in Section~\ref{subsec:anneal}. The signal consists of two anomalous points located at different distances from the origin ($x_1^*=-5$, $x_2^*=-10$), so that their anomaly strength is different. We train a 2-kernel model with and without $\rm SoftMax$ activation and report in Figure~\ref{fig:app_2k-2a} the evolution of the locations of the two kernels during training. In absence of $\rm SoftMax$, both kernels converge to the location of the leading anomaly, whereas in its presence, the repulsion between the kernels forbids them to collapse on the same anomaly, thus enabling the discovery of the secondary anomaly.
\begin{figure}[htbp]
    \centering
    \begin{minipage}{0.48\textwidth}
        \centering
        \includegraphics[width=\linewidth]{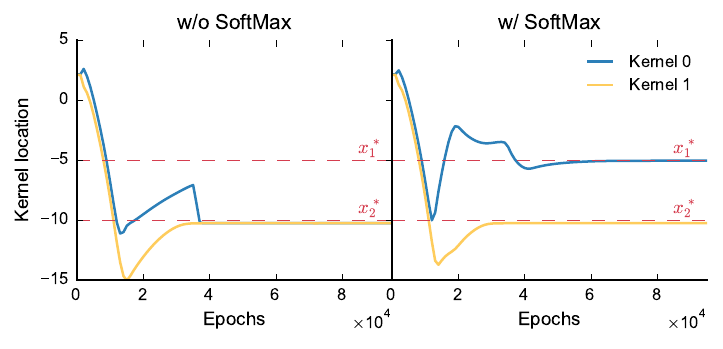}
    \caption{\textbf{The SoftMax activation helps exploration.} Evolution over training of the location of the kernels in the 2-kernel \textsc{SparKer}. model with (right panel) and without (left panel) $\rm SoftMax$. The $\rm SoftMax$ activation makes the kernels repel each other, allowing the model to discover both the dominant anomaly at $x=-10$ and the secondary anomaly at $-5$. In its absence, the model only discovers the dominant anomaly. }
    \label{fig:app_2k-2a}
    \end{minipage}
    \hfill
    \begin{minipage}{0.48\textwidth}
        \centering
        \includegraphics[width=\linewidth]{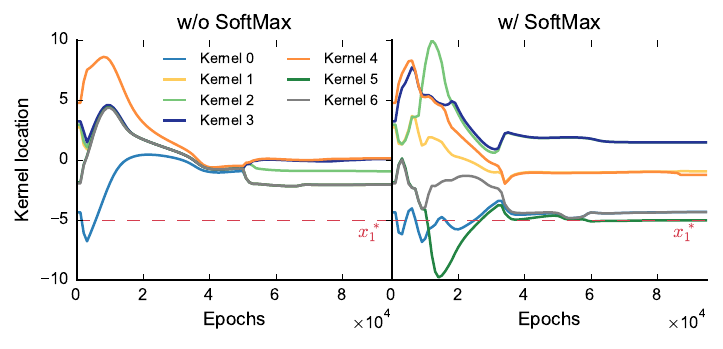}
    \caption{\textbf{The SoftMax activation helps escaping local traps.} Evolution over training of the location of the kernels in a 7-kernel \textsc{SparKer}. model with (right panel) and without (left panel) $\rm SoftMax$ activation. $\rm SoftMax$ induces a repulsion between kernels that allows the model to discover the real anomaly at $x=-5$, despite statistical fluctuations. }
    \label{fig:app_7k-1a}
    \end{minipage}
\end{figure}

The other crucial benefit of the $\rm SoftMax$ activation is the ability to escape local minima caused by statistical fluctuations. Statistical fluctuations can give rise to over- and under- density of similar local significance than a real anomaly. 
$\rm SoftMax$ discourages the kernels to collapse on a single local solution, promoting exploration, eventually increasing the chance of escaping local traps.
To illustrate this phenomenon, we consider a second experiment on the one-dimensional toy model defined by the mixture of two Gaussian models. A signal is generated according to an additional Gaussian distribution with location $x^*=-5$ and scale $0.1$. To study the impact of statistical fluctuations, we consider a training set with limited sample size. We trained two 7-kernel \textsc{SparKer}. models differing only for the presence of the $\rm SoftMax$ activation. Figure~\ref{fig:app_7k-1a} shows the evolution of the kernels during training for the models with $\rm SoftMax$ activation (right panel) and without (left panel). In the absence of $\rm SoftMax$, the kernels tend to collapse into fewer locations dominated by statistical fluctuations, missing the anomaly located at $-5$. Conversely, with $\rm SoftMax$, the kernels explore the space better and eventually identify the anomaly.

The theoretical analysis and controlled experiments presented in this Section highlight the strong detection performance of \textsc{SparKer}. in regimes of low-anomaly separability and fraction. Additional experiments on simple one- and two-dimensional problems, reported in the Supporting Information, further assess the effect of modeling choices on detection power. Experiments over multiple data realizations and initial conditions show that (1) in regimes of signal scarcity and low separability, the NP loss systematically leads to higher performances than alternative binary classification losses, like Binary cross Entropy or Mean Square Error; (2) including $\rm SoftMax$ systematically promotes uniform kernel distribution over the space, enhancing sensitivity. Moreover, we assess the robustness of the test across signal benchmarks spanning different AD regimes. We find that the performance of \textsc{SparKer}. is fairly robust even in scenarios where sparsity and locality cease to be optimal assumptions.

\section{Applications}\label{sec:applications} 
We apply \textsc{SparKer}. to a broad set of AD benchmarks encompassing scientific discovery, open-world novelty detection, intrusion detection, and validation of generative AI models. Despite differences in application domain, data modality, and feature-space dimensionality, these problems exhibit similar characteristics—in particular, anomalies have low separability—as illustrated in Fig.~\ref{fig:main}(a).

\subsection{Baselines}\label{subsec:baselines}
To put the performance of \textsc{SparKer}. in its proper context, we compare the results of our experiments with two baseline AD methods. Both methods rely on kernel methods.  They differ in the assumptions and design choices they make. In both cases, we ensemble over multiple kernel bandwidths, initialize their locations randomly from data and do not train the latter. The baselines we select are described in the following.

\paragraph{Kernel-based Neyman-Pearson test (Falkon).}
We implement the NP test with kernel logistic regression as introduced in~\cite{Letizia:2022xbe}. The implementation exploits the Falkon library~\cite{falkonlibrary2020}, which combines fast iterative solvers with a Nystr\"om approximation of the kernel matrix projected onto a low-rank subspace defined by a small set of locations. In this setting, the model is effectively a linear combination of Gaussian kernels whose centers are the selected locations, chosen randomly from the data and kept constant during training. 
For a fair comparison, we ensemble over multiple kernel bandwidths and combine the tests using the minimum $p$-value, following the procedure from~\cite{Grosso:2024wjt}.

\paragraph{Nystr\"om Maximum Mean Discrepancy (Nystr\"om-MMD).}
Maximum Mean Discrepancy (MMD)~\cite{NIPS2006_e9fb2eda,JMLR:v13:gretton12a} is a kernel-based metric for comparing distributions that avoids density estimation and has been widely used in generative model evaluation and domain adaptation. Variants of MMD enhance interpretability and  detection power by optimizing witness functions~\cite{Jitkrittum:2016,Kubler2021Witness,Kubler2022AutoML} or exploiting multiple scales~\cite{10.5555/3666122.3669406, 10.5555/3648699.3648893}. We implement the recently-proposed Nystr\"om-MMD~\cite{chatalic2025efficient}, which approximates the kernel matrix via low-rank Nystr\"om projections, reducing complexity while preserving minimax-optimal power guarantees, thus enabling scalable two-sample testing in high dimensions. With this approximation, we compute a robust version of MMD called MMDfuse~\cite{10.5555/3666122.3669406}, where MMD tests with different values of the kernel width are aggregated via the log-sum-exp rule.

\begin{figure*}[t]
    \centering
\includegraphics[width=\linewidth]{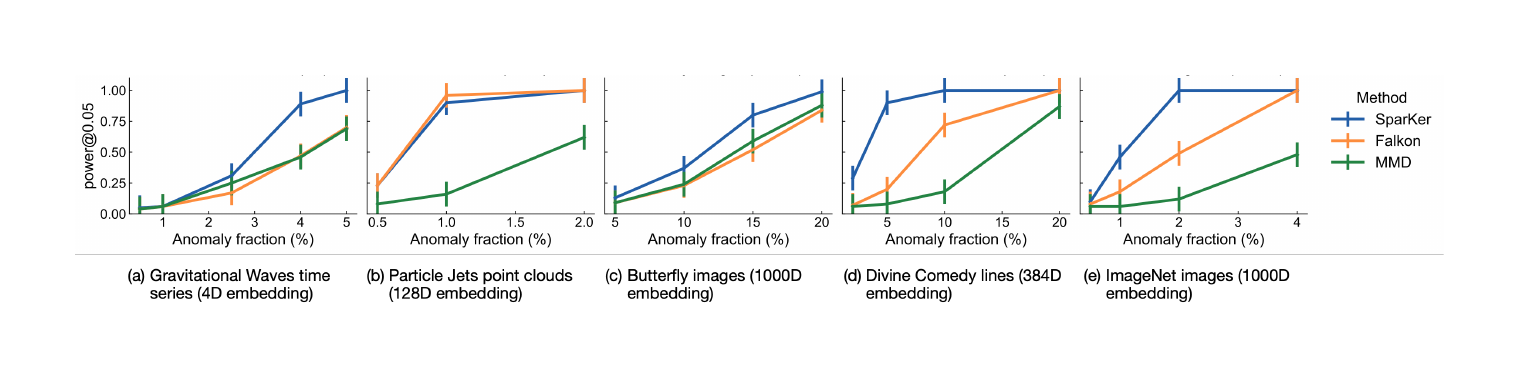}
    \caption{\textbf{Sensitivity study:
    detection power as a function of anomaly fraction.} We report the performance of \textsc{SparKer}. in terms of detection power at 5\% false positive rate as a function of the anomaly fraction. Error bars represent the Clopper-Pearson 68\% confidence intervals. We consider various applications, differing in raw-data modality, anomaly fraction and the dimension of the feature space in which we embed the data: we embed gravitational-waves time series into a 4-dimensional representation (panel (a)), point clouds representing particle jets in a 128-dimensional space (panel (b)); butterfly images and ImageNet data in a 1000-dimensional space (panel (c) and (e)), and lines from Dante's Divine Comedy in a 384-dimensional-space (panel (d)). We report additional details about the experiments in the Supporting Information.}
    \label{fig:novelty}
\end{figure*}
\subsection{Scientific discovery}\label{subsec:science}
Novelty discovery in scientific domains is often times characterized by extremely rare signals of unknown location and nature, making the desiderata we operationalize in \textsc{SparKer}. particularly suitable. We can construct a reference sample either via simulation of a theoretical model or by means of well-understood data. We claim discovery at level $\alpha$ when newly-collected data deviates from the reference with a $p$-value less than $\alpha$.
We present three examples of signal-agnostic novelty detection for scientific discovery, spanning the fields of astrophysics, particle physics and genomics. We encode data from the different modalities, namely time series of gravitational waves in astrophysics, point clouds from particle jets, and butterfly images from genomics, in rich latent representations and compare to a nominal dataset using SparKer. For the gravitational waves and particle jets, the nominal sample consists of synthetic data coming from well controlled simulations of a theoretical model. For the application to genomics, the ground truth comprise experimental data labeled as nominal by experts in the field.  
We defer implementation details and the descriptions of datasets to the Supporting Information. Panels (a), (b) and (c) in Figure~\ref{fig:novelty} report the detection performance of SparKer, compared to the Falkon and MMD baselines for the three use cases. We report the test power at a 5\% false positive rate as a function of the fraction of anomalous signal injected in the sample. \textsc{SparKer}. consistently performs similarly to or outperforms the baselines.

\subsection{Open-world novelty detection}\label{subsec:ood}
Out-of-domain (OOD) detection is a key task for assessing the robustness and safety of machine learning (ML) systems trained under the \textit{closed-world} assumption~\cite{markou2003novelty}. OOD inputs—originating from the \textit{open world}—differ from the training distribution, leading to unreliable model predictions. Detecting such inputs is therefore essential for trustworthy ML deployments~\cite{amodei2016concrete}.
In computer vision, numerous methods have been benchmarked on standard datasets~\cite{yang2022openood}, relying on measures of the extent to which data lie out of distribution (``outness'') based either on classification, density or distance. However, a recent study~\cite{li2025out} shows that many of these strategies are misaligned with the OOD detection objective: rather than determining whether a sample originates from the training distribution, they assess its feature proximity or prediction uncertainty, proxies that do not capture distributional novelty reliably.
To illustrate this, we test \textsc{SparKer}. on the 1000-dimensional latent space of a ResNet50 classifier trained on ImageNet-1000~\cite{ILSVRC15}, injecting ``Striped'' samples from the Textures dataset~\cite{cimpoi2014describing} as OOD anomalies. The network classifies ``Striped'' data with high confidence, highlighting the exposure of logit-based OOD detection to failure modes~\cite{li2025out}.
We report SparKer's detection power as a function of anomaly fraction in panel (e) of Figure~\ref{fig:novelty}.
With a calibration set comprising 300 samples and an anomaly fraction as low as 2\%, we find an upper limit on the empirical $p$-value of $0.003$, indicating a disagreement between the dataset containing the Striped intrusions and the closed world data at the level of $2.7\sigma$ or higher. Moreover, turning the likelihood-ratio model \textsc{SparKer}. estimated into an anomaly score, we find, at 95\% true positive rate (FPR@\%95), that \textsc{SparKer}. exhibits a false positive rate  of 2.5\%, in sharp contrast to standard logit-based methods that exhibit a  FPR@\%95 as high as 66\%~\cite{li2025out}.

\subsection{Intrusion detection}\label{subsec:intrusion}
We prompted Meta-Llama-3-70B-Instruct~\cite{llama3modelcard} to randomly generate lines from Dante's Divine Comedy and  embedded these into 384-dimensional representations using multilingual Sentence-BERT~\cite{reimers-2019-sentence-bert}. We tested samples comprising 1000 lines. To simulate a problem of intrusion detection, we injected 2 to 20\% AI generated lines in the dataset and tested SparKer's power of detection. 
Similar to the scientific discovery and open-world settings, we compared the performance of \textsc{SparKer}. performance to those of Falkon and MMD. \textsc{SparKer}. keeps almost full power for anomaly injection as low as 5\%, indicating that its power is five times as large as those of Falkon and MMD. This enables the detection of subtle anomalous patterns that could otherwise not be discovered (see panel (d) of Figure~\ref{fig:novelty}).

\begin{figure*}[t]
    \centering
\includegraphics[width=\linewidth]{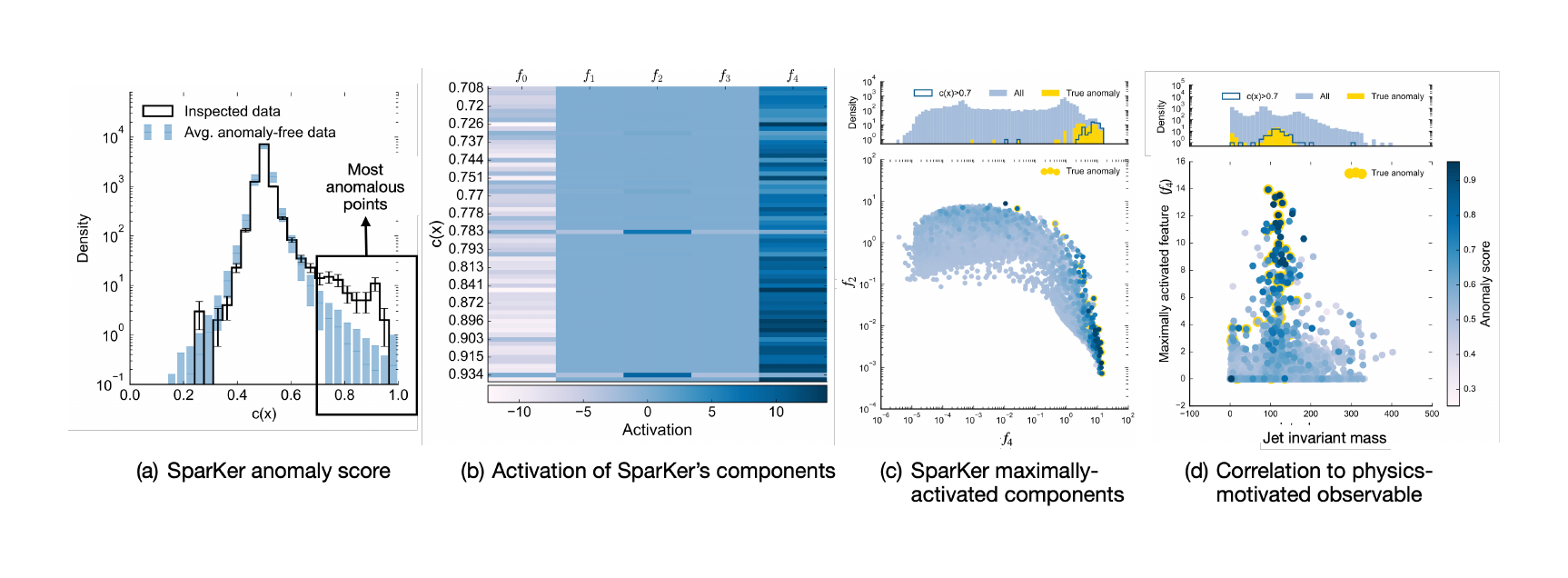}
    \caption{\textbf{Inspecting anomalous particle jets.} (a) Distribution of the anomaly score, $c(x)$, over the inspected data (black histogram), compared with the average expected distribution in absence of anomalies (error bars represent the standard deviation over 100 realizations).
    (b) Activation pattern over SparKer's components for the most anomalous data points. At high anomaly score, $f_4$ and $f_2$ are the main positively activated components. (c) Correlation of the two maximally-activated components, color coded by anomaly score. The anomalous region are pushed to the edges of the distribution. (d) Correlation of $f_4$ with the jet's invariant mass, a physics-motivated observable characterizing the data. Without a priori knowledge about physics, the model highlights specific values of the jet mass around 125 GeV, matching the Higgs boson mass, used as the anomalous signal in this application. 
    }\label{fig:LHCjet_interp}
\end{figure*}
\subsection{Validation of Generative AI}\label{subsec:genAI}
Another relevant application of \textsc{SparKer}. is the validation of generative models. AI techniques to generate artificial samples are rapidly improving, producing synthetic data that are often hard to distinguish from real ones. The ability to assess distributional shifts can highlight subtle differences between real and artificial generation processes, and identify deficiencies in the representations of concepts representations. Together, these capabilities can help the process of refining generative AI and the development of more reliable tools.

We compared the original CIFAR-10 dataset~\cite{krizhevsky2009learning} with the CIFAR-5m dataset introduced in~\cite{nakkiran2020deep}. This dataset was artificially generated by sampling the DDPM unconditional generative model developed in~\cite{ho2020denoising} to imitate CIFAR-10.
We apply \textsc{SparKer}. to CIFAR-10 and CIFAR-5m samples embedded in the 1000-dimensional feature space output by a ResNet50 model pretrained on ImageNet. This representation provides a rich basis of features that is not tailored to CIFAR, making the problem harder but realistic. 
The CIFAR-5m generative process takes the role of reference hypothesis. We tested how well it fit the original CIFAR-10 data. 
Using a reference set of $\Nreference=10^5$  and a test set of $\Ndata=10^4$ we found an upper limit on the empirical $p$-value of $0.005$, indicating a disagreement between the synthetic and real data distribution at the level of $2.6\sigma$.
\section{Extracting intelligible knowledge}\label{sec:interp}

Interpretability in AD has evolved from \textit{post-hoc} explanations—such as feature attribution, saliency maps, and surrogate models—toward intrinsic and mechanistic approaches that embed human-understandable structure within the model itself~\cite{li2023survey}.
Recent studies highlight how superposition, modularity, and representational geometry shape interpretability~\cite{higgins2018towards,elhage2022toy,klindt2025superposition,cunningham2023sparse,colin2024local}, motivating architectures that are transparent by design~\cite{sharkey2025open}.
In this context, \textsc{SparKer}. provides a geometric framework in which sparsity, locality, and competition yield kernel locations or features  that directly map to regions of distributional deviations in the data. Below we explain how to extract geometric insights and  interpretations from SparKer.

\paragraph{Anomaly score}
The most informative output of the algorithm is the log-density ratio, namely the model $f_M^\sigma$, which we can transform into an \textit{anomaly score} via the Sigmoid activation:
\begin{equation}
    c(\vx) = {\rm Sigmoid}(f_M^\sigma(\vx))
\end{equation}
We can use the anomaly score to identify and select the most anomalous points in the dataset. Panels (a) of Figures~\ref{fig:LHCjet_interp} and~\ref{fig:imagenet_interp} show two examples of anomaly-score distributions over samples containing anomalous points (black histogram). The panels also show, as a baseline for calibration, the average distributions for the anomaly-free datasets (light blue bars). The error bars represent the standard deviation among 100 replicas. Similar to standard classifiers, the anomaly score provides a degree of confidence on the anomaly label that informs us on the expected density of anomalies in a specific region of the input data domain. We can define a selection threshold on the right tail of the distribution to identify the regions with the highest detected over-density in the sample for further inspection. Similarly, selections on the left tail highlight regions with detected under-density.

\paragraph{Geometric characterization via feature specialization}
In contrast to existing models, \textsc{SparKer}. provides a geometric perspective on the anomaly score  arising from the interplay of locality, sparsity, and competition. The model naturally decomposes into local components:
\begin{equation}
    f_M^\sigma(\vx) = \sum_{i=1}^M f^\sigma_i(\vx) \, .
\end{equation}
Due to the $\rm SoftMax$ activation in the model's design, different kernels specialize to different subsets of the anomalous data and activate in response to different attributes, allowing the disentanglement of role of individual kernels and their interpretation. 
Pairing the anomaly score with the activation pattern helps to characterize anomalies and their properties (see panel (b) in Figures~\ref{fig:LHCjet_interp} and ~\ref{fig:imagenet_interp}). \\

In the following, we focus on three of the applications discussed in Section~\ref{sec:applications}: anomalous jet discovery in particle physics, novel class discovery in open-world settings, and intrusion detection in text streams. We present complementary analyses addressing generative-model validation and butterfly-species discovery in the Supplementary Information.

\begin{figure*}[t]
    \centering
\includegraphics[width=\linewidth]{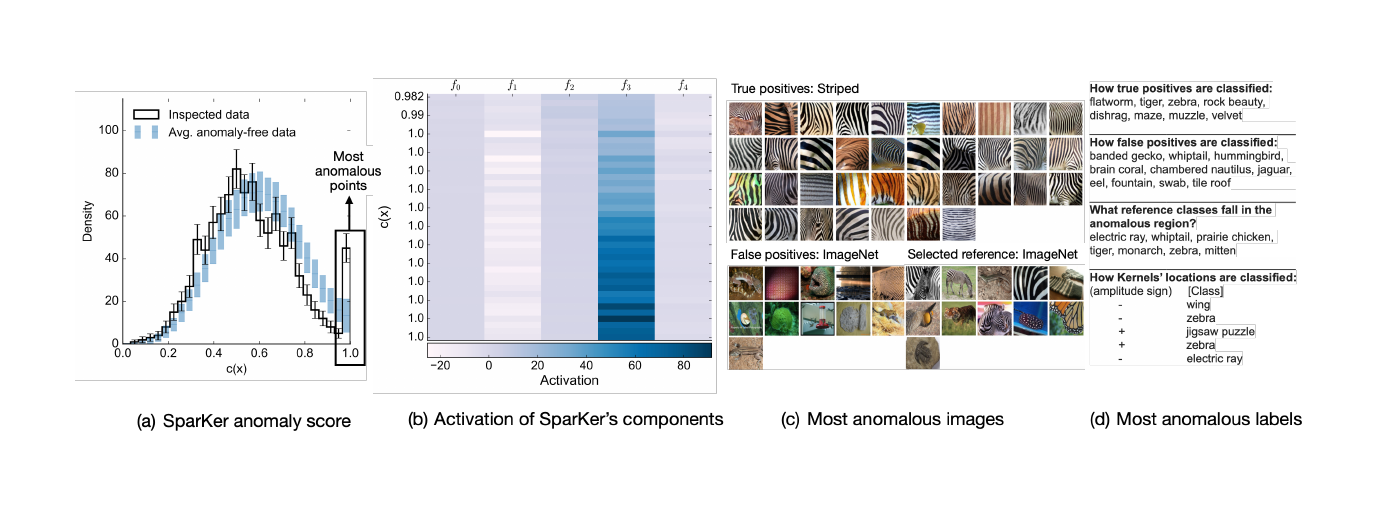}
    \caption{\textbf{Inspecting detected open world novelties.} (a) Distribution of the anomaly score, $c(x)$, over the inspected data (black histogram), compared with the average expected distribution in absence of anomalies (error bars represent the standard deviation over 100 realizations).
    (b) Activation pattern of SparKer's components for the most anomalous data points. The black box highlights the most active component, $f_3$.
    (c) Images associated to the most anomalous points, distinguishing between true positives coming from the ``Striped'' class in Textures, false positives and reference events coming from ImageNet test set.
    (d) Predicted labels of the most anomalous points. While localized around $f_3$, the anomalous points span different ImageNet classes, highlighting a non-trivial geometric structure of the anomaly.}\label{fig:imagenet_interp}
\end{figure*}
\paragraph{Interpreting novel particle jets}
Figure~\ref{fig:LHCjet_interp} illustrates the discovery of anomalous particle jets introduced in Section~\ref{subsec:science}, focusing on the 1\% signal injection case. Panel (b) shows that anomalous events exhibit a strong positive correlation with \textsc{SparKer}. component, $f_4$. To further characterize the signals detected, panel (d) displays the correlation between the most activated component, $f_4$, and a physics-motivated observable related to the jet’s invariant mass. The true anomalous points (in yellow) are strongly activated by $f_4$ and cluster around the Higgs boson mass region (125 GeV), which corresponds to the dominant attribute of the injected signal. This demonstrates SparKer’s ability to isolate and interpret physically-meaningful structure within complex data.

\paragraph{Extracting insights on open-world novelties} Figure~\ref{fig:imagenet_interp} summarizes the interpretability analysis of the open-world novelty detection task introduced in Section~\ref{subsec:ood}. The anomaly score distribution in panel (a) highlights the most anomalous samples, while their correlation with feature activations in panel (b) reveals that the anomalies are primarily associated with feature $f_3$ and $f_1$. Applying a threshold of $c=0.95$ yields a selection efficiency of 92.5\% and a false positive rate of 1.1\%. Panel (c) shows the corresponding images, distinguishing true positives, i.e. out-of-domain samples from the ``Striped'' class in the Textures dataset, from false positives and reference samples from ImageNet. Notably, the reference images selected above the threshold exhibit striped patterns, suggesting that this concept was implicitly learned by ResNet50 despite the absence of explicit supervision. Interestingly, ResNet50 classifies the centroids of components $f_1$ and $f_4$ both as ``zebra'', although their locations and the sign of their correlations with anomalies in logit space differ, indicating that the novel ``Striped'' concept overlaps only partially with the known ``zebra'' representation, and relates to other known classes such as ``tiger'', ``velvet'' and ``dishrag''.  

\begin{figure*}[t]
    \centering
    \includegraphics[width=1\linewidth]{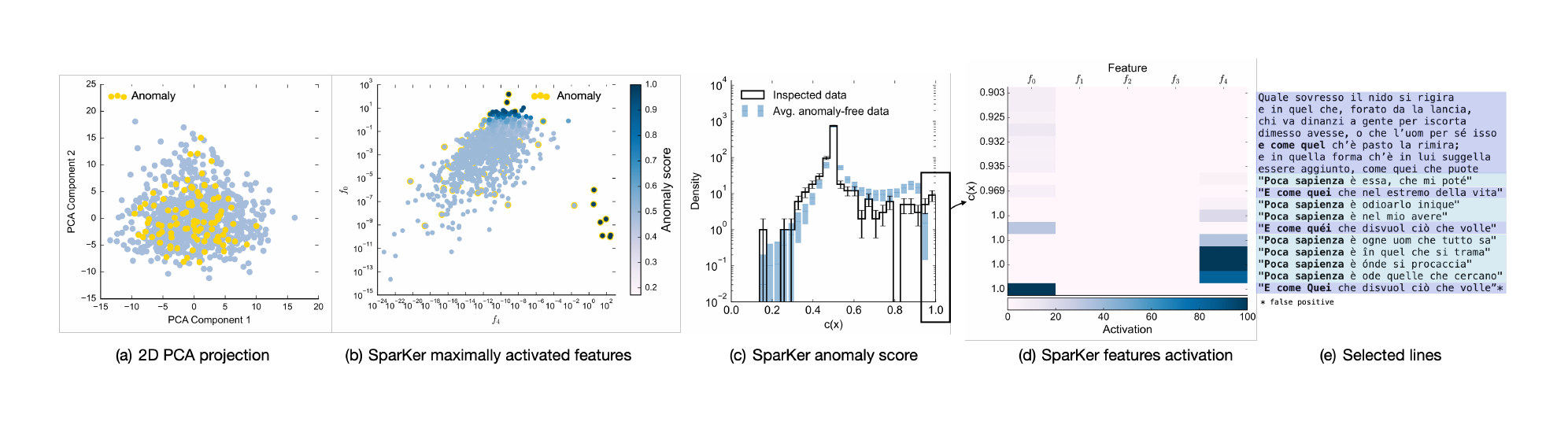}
    \caption{\textbf{Inspecting detected intrusions in text lines.} 
    (a) 2D principal component analysis (PCA) projection of the data, where the anomalous points, highlighted in yellow, lie on the support of the normal data. 
    (b) 2D projection obtained using the two maximally-activated features found by SparKer; in contrast to PCA, \textsc{SparKer}. finds low-dimensional representations that single out the most anomalous regions of the data. 
    (c) Distribution of the anomaly score output by \textsc{SparKer}. over the inspected data (black histogram), compared to the average expected distribution in absence of anomalies (the light blue bars measure the standard deviation over 100 anomaly-free realizations). 
    (d) Activation pattern of SparKer's components for the most anomalous data points. Two components, $f_0$ and $f_4$, exhibit the most correlation with the anomaly score.
    (e) Most anomalous lines colored by their activation pattern.}\label{fig:intro_interp}
\end{figure*}
\paragraph{Extracting insights on intruded text} Finally, Figure~\ref{fig:intro_interp} summarizes the interpretability study performed on the language application we presented in Section~\ref{subsec:intrusion}. 
Panel (a) displays a simple 2D PCA projection of the data, where anomalous samples (yellow dots) overlap with nominal points, indicating that simple linear projections cannot separate them. In contrast, panel (b) shows a 2D projection derived from the most activated \textsc{SparKer}. components, where the anomalous points are distinctly isolated. Panel (c) presents the distribution of the anomaly score over the inspected data (black histogram) as opposed to the typical behavior of anomaly-free realizations (light blue bars); we identify a group of highly anomalous points (in the black box) for further analysis. 
The activation patterns of the most anomalous data points are displayed in panel (d), and their corresponding text lines in panel (e). We recognize two dominant patterns, highlighted in different colors in panel (e), corresponding to the activation of components $f_0$ and $f_4$, respectively. Both patterns reveal over-represented linguistic expressions. In the light blue group, associated with $f_4$, we recognize the repeated use of the phrase \textit{``Poca sapienza \`e''} (``little wisdom is'') at the beginning of each line, a construction entirely absent in the Divine Comedy. In the dark blue group, we identify the recurrent phrase \textit{``E come quei che''} (``and like those who'') at the line openings. This expression does appear in the Divine Comedy in the form marked by an asterisk at the bottom of the list, but the LLM generates multiple variations with hallucinated continuations.   

\section{Final remarks}
\label{sec:conclusion}
\textsc{SparKer}. unites sparsity, locality, and competition into a self-organizing ensemble for interpretable in-distribution anomaly detection. 
We demonstrated that \textsc{SparKer}. reliably identifies subtle deviations across applications ranging from scientific discovery and generative-model validation to intrusion and open-world novelty detection, spanning diverse data modalities, feature-space dimensions, and signal fractions. In the most challenging regimes, \textsc{SparKer}. consistently outperforms state-of-the-art two-sample tests.
Beyond its detection power, \textsc{SparKer}. naturally lends itself to interpretability: its components specialize to distinct regions of the data space, yielding a geometric decomposition of the anomaly score assigned to each input. This property paves the way for trustworthy, rule-based anomaly triggers, scientific insight extraction, and continual learning. \textsc{SparKer}. demonstrates the importance of careful design choices and desiderata to control the failure modes of AD methods and to characterize the conditions under which we can expect them to succeed. Furthermore, \textsc{SparKer}. highlights an interplay between properties of data representations and properties of statistical methods. We observed how rich and informative data representations endowed with meaningful geometry can enable successful, signal-agnostic detection.

The design of \textsc{SparKer}. leaves open several exciting directions of future advances, from adaptive kernel covariances to trainable log-concave alternatives to Gaussian kernels~\cite{lin2023train}.

While we focused on semi-supervised anomaly detection, the model’s capacity to capture local geometric structure suggests broader potential~\cite{yeh2018representer}. Future work will explore local activation mechanisms as a foundation for robust and interpretable model design beyond anomaly detection, advancing the integration of locality, interpretability, and adaptability in modern machine learning systems.

Finally, a question we have left unanswered is that of what constitute ``good'' feature extractors for anomaly detection? We conjecture that ``good'' feature extractors make it simple to relate anomalies in representation space back to attributes of inputs, suggesting some form of invertibility as important.

\section*{Acknowledgment}
G.G., S.B.T., and P.H acknowledge the financial support of the National Science Foundation under Cooperative Agreement PHY-2019786 (The NSF AI Institute for Artificial Intelligence and Fundamental Interactions, http://iaifi.org/).
D.B. and T.F. acknowledge the gift from the Chan Zuckerberg Initiative Foundation to establish the Kempner Institute for the Study of Natural and Artificial Intelligence at Harvard University. Computations in this paper were run on the FASRC Cannon cluster supported by the FAS Division of Science Research Computing Group at Harvard University.

\newpage
\appendices
\section{Model design and implementation}\label{app:init}
In this Section we provide additional details on the model design and implementation.

\subsection{Deterministic annealing}
The vanishing gradient represents the main engineering challenge to train local kernels. We mitigate this issue by imposing deterministic annealing to the kernels' width.
Starting from a wide width, $\sigma_0$, we linearly reduce the kernels' width up to a final narrow width, $\sigma_T$, taking equally large steps at each epoch $t$:
\begin{equation}
    \sigma_t = \sigma_0 + (\sigma_0 - \sigma_T)\cdot \frac{t }{T} 
\end{equation}

\subsection{Amplitude regularization for a robust testing}
When statistical power is low, density estimates are sensitive to fluctuations, leading to model misspecification and overfitting in sparse regions. Huber’s seminal work on robust hypothesis testing~\cite{huber1965robust} showed that minimax-optimal tests under distributional uncertainty clip the likelihood ratio to limit sensitivity to outliers. Following this principle, we clip \textsc{SparKer}’s kernel amplitudes as a regularization against noisy regions, with the clipping coefficient tuned in a signal-agnostic way using anomaly-free data. We further apply standard L2 regularization.

\subsection{Model initialization}
The kernels' amplitudes are initialized at zero, making the starting model identically null in all the data domain. This initialization reduces initial bias towards specific solutions and allows a slow evolution towards meaningful solutions as the widths are annealed.
The width range for annealing is defined estimating the typical range of pair-wise distance between points. We compute the 1\% and 99\% percentiles of the empirical distribution of the pair-wise distance between data points (a subset of the data is used for computational reasons). The initial width is set at twice the 99\% percentile, while the final width is set at half the 1\% percentile. 
The kernels' locations are randomly initialized among the locations of the data points. This is an inexpensive way to ensure that the kernels are initially located in-distribution. This initialization, combined with the range chosen for the kernels' widths ensures with high probability that at least one model gradient is significantly larger than zero for each data point.

\subsection{Multiple resolution test}
Multiple solutions based on different values of the kernels' width can be retrieved from intermediate stages of the annealing process happening during training. 
A priori there is no way to tell which value of the width encountered during annealing is optimal for testing, as the latter depends on the specifics of the anomalous signal. 
To obtain robustness against scale variations, we perform multiple testing across values of the width and account for the trial factor by combining the resulting p-values into a single test. Based on previous studies on multiple testing with the NP test~\cite{Grosso:2024nho}, the aggregation rule we adopt is
the average of the minimum $p$-value and product of $p$-values~\cite{Grosso:2024wjt}. For an observed value of the test, $t_{\rm obs}$, the  $p$-value is estimated empirically, as
\begin{equation}\label{eq:emp_pvalue}
   \hat{p}_{\rm obs} = \frac{1}{N_{\rm toys}^{({\rm H_0})}+1} \left[\sum_{i=1}^{N_{\rm toys}^{({\rm H_0})}} \mathbb{I}(t_i-t_{\rm obs})+1\right], 
\end{equation}
where $\{t_i\}_{i=1}^{N_{\rm toys}^{(H_0)}}$ represents a collection of anomaly-free realizations used for calibration.

\section{Theoretical analysis}
In this appendix, we develop the theoretical underpinnings of \textsc{SparKer}'s anomaly detection properties. We restate and prove results included in Section~\ref{sec:theory}. 
\noindent The Neyman-Pearson (NP) loss which is optimized to train the \textsc{SparKer} model is defined as:
\begin{align}
        L_{\rm NP}[f]= \sum\limits_{x\in \reference}w_\reference(e^{f(x)}-1) -\sum\limits_{x\in\data} f(x)\,.
        \label{eq:nploss}
    \end{align}
The dynamics of mean $\vmu_i$ and amplitude $a_i$ are given by gradient descent on the NP Loss:
\begin{align}
    \vmu_{i;t+1} &= \vmu_{i;t}-\eta \left.\frac{\partial L_{\rm NP}[f_{\theta}]}{\partial \vmu_{i}}\right|_{\theta = \theta_t},\\
    a_{i;t+1} &= a_{i;t} - \eta \left. \frac{\partial L_{\rm NP}[f_{\theta}]}{\partial a_{i}}\right|_{\theta = \theta_t},
    \label{eq:grad-descent-mu}
\end{align}
where $\theta_t =\{\vmu_{1;t}, \dots \vmu_{M;t}, a_{1;t}, \dots a_{M,t}\}$ represents all trainable parameters at training iteration $t$. The loss function $L_{\rm NP}$ includes evaluating the model $f$ at multiple data points $\vx \in \reference \cup \data$. We focus our analysis on the dynamics of the location $\vmu_i$ and amplitude $a_i$ of a generic kernel $i$ in the \textsc{SparKer} ensemble. 

\subsection{Effect of individual points on kernel means and amplitudes.} We first study the effect of individual data points on the dynamics of mean $\vmu_i$ and amplitude $a_i$ through their corresponding gradients $\partial_{\vmu_i}f, \text{ and } \partial_{a_i}f$ in the following lemma:

\begin{lemma}[Radial influence of points on kernels' location] Individual points $x$ impose a radial influence on kernel means $\vmu_i$, expressed as a 
vector field:
\begin{align*}
    \partial_{\vmu_i} f(\vx) = 
    A_i(\vx) \cdot \vr_i
\end{align*}
where $\vr_i = \vx - \vmu_i$ and $A_i(\vx)=\frac{1}{\sigma^2}\cdot \left[2\,a_i k_{\vmu_i}^\sigma(\vr_i) - f_M^\sigma(\vx)\right] \cdot p^i[\boldsymbol{k_{\vmu}^{\sigma}}](r_i)$. 
\label{lemma:radialinfluence}
\end{lemma}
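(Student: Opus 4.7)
The plan is to prove the claim by direct differentiation. Expanding the model from Eq.~\ref{eq:model} gives $f_M^\sigma(\vx) = \sum_j a_j\, p^j[\vk_\vmu^\sigma](\vx)\, k_{\vmu_j}^\sigma(\vx) = \sum_j a_j\, k_{\vmu_j}^\sigma(\vx)^2/S(\vx)$, where $S(\vx) = \sum_l k_{\vmu_l}^\sigma(\vx)$ is the softmax normalizer. The parameter $\vmu_i$ enters this expression in two distinct places: through the single kernel $k_{\vmu_i}^\sigma$, and through the shared normalizer $S$ which couples $\vmu_i$ to every component of the ensemble. The proof reduces to a controlled quotient-rule calculation after extracting a universal radial factor.

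First I would compute the elementary building blocks. For any Gaussian kernel of the form in Eq.~\ref{eq:kernel} with isotropic covariance $\sigma^2 \mathbb{I}$, one has $\partial_{\vmu_i} k_{\vmu_i}^\sigma(\vx) = \sigma^{-2}\, \vr_i\, k_{\vmu_i}^\sigma(\vx)$ with $\vr_i = \vx - \vmu_i$, while $\partial_{\vmu_i} k_{\vmu_j}^\sigma(\vx) = 0$ for $j \neq i$. Consequently $\partial_{\vmu_i} S(\vx) = \sigma^{-2}\, \vr_i\, k_{\vmu_i}^\sigma(\vx)$. Every derivative in what follows thus carries the common radial factor $\vr_i/\sigma^2$, which is the structural origin of the claim.

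Next I would apply the quotient rule to each term $a_j\, k_{\vmu_j}^\sigma(\vx)^2 / S(\vx)$, separating the diagonal $j=i$ contribution from the off-diagonal $j \neq i$ ones. The diagonal term contributes $a_i\, p^i (2 - p^i)\, k_{\vmu_i}^\sigma(\vx)$ times $\vr_i/\sigma^2$, while each off-diagonal term contributes $-a_j\, p^i p^j\, k_{\vmu_j}^\sigma(\vx)$ times $\vr_i/\sigma^2$. Summing over $j$ and factoring out $\vr_i\, p^i[\vk_\vmu^\sigma](\vx)/\sigma^2$, the bracket that remains is $2 a_i\, k_{\vmu_i}^\sigma(\vx) - \sum_j a_j\, p^j[\vk_\vmu^\sigma](\vx)\, k_{\vmu_j}^\sigma(\vx)$, and the latter sum is, by definition, $f_M^\sigma(\vx)$. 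This directly yields the claimed form with $A_i(\vx) = \sigma^{-2}\,[2 a_i\, k_{\vmu_i}^\sigma(\vx) - f_M^\sigma(\vx)]\, p^i[\vk_\vmu^\sigma](\vx)$.

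The only delicate step is the bookkeeping in the third paragraph: the softmax normalizer couples the gradient of the $i$-th kernel to every other component through $S(\vx)$, so naively writing $\partial_{\vmu_i} f \approx a_i\, \partial_{\vmu_i}(p^i k_i)$ would miss the $j \neq i$ cross-contributions. The main obstacle is thus not mathematical difficulty but careful index tracking; recognizing that the cross-term sum telescopes into $f_M^\sigma(\vx)$ itself is the key identity that produces the interpretable ``mass charge'' $m_i(\vx) = 2 a_i\, k_{\vmu_i}^\sigma(\vx) - f_M^\sigma(\vx)$ appearing in the subsequent push-pull lemma.
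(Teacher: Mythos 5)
Your proof is correct and takes the same direct-differentiation route the paper does; the paper's own proof merely notes that $f$ depends on $\vmu_i$ only through $\|\vr_i\|^2$ (giving the $\vr_i$ factor by chain rule) and then states that $A_i$ follows by product rule, while you carry out the quotient-rule bookkeeping explicitly and show the telescoping identity $a_i k_i p_i(2-p_i) - \sum_{j\ne i} a_j k_j p_j p_i = p_i\left[2a_i k_i - f_M^\sigma\right]$ that recovers the stated $A_i(\vx)$. Your fuller derivation verifies the coefficient that the paper asserts without computation.
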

\begin{proof}
    This is a direct consequence of our modeling choice, elaborated below.
    $f$ is parameterized using Gaussian kernels, i.e. $f(\vx) \equiv f(\|\vr_1\|^2,\dots \|\vr_M\|^2)$ with $\vr_i = \vx-\vmu_i$, thus $\partial_{\vmu_i}f = (\partial_{\|\vr_i\|^2}f)\cdot (-2\vr_i)$ through the chain rule. The amplitude $A_i$ is  obtained by directly computing the gradient of $f$ using the product rule. 
\end{proof}

The radial influence of each data point can either be attractive (radially toward the point) or repulsive (radially away from the point) on the kernel mean $\vmu_i$. The conditions which determine the direction of influence are specified in the following lemma. 

\begin{lemma}[Push-pull dynamic of kernels] The dynamics of the $i^{th}$ kernel location $\vmu_i$ is the result of radial forces arising from the training points. Let $y$ be the class label associated with each data point ($y=0$ for $\vx\in \reference$ and  $y=1$ for $\vx \in \data$), and let $A_i(\vx)=2a_i k_i(\vx) - f(\vx)$ the $i^{th}$ be the kernel ``mass charge'' in $\vx$. The sign of each interaction is determined as follows:
\begin{align}
    \text{``radially attractive''} \quad \text{if}\quad A_i(\vx)\cdot(2y-1)>0 \notag \\
    \text{``radially repulsive''} \quad \text{if}\quad A_i(\vx)\cdot(2y-1)<0 \notag
\end{align}  
\end{lemma}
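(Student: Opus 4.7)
The plan is to unpack the single-point contribution to the gradient update of $\vmu_i$ and classify when the resulting displacement points radially inward toward $\vx$ versus outward from $\vx$. From the explicit form of $L_{\rm NP}$ in Eq.~\ref{eq:nploss}, each $\vx \in \reference$ contributes $+w_\reference e^{f(\vx)}\,\partial_{\vmu_i}f(\vx)$ to $\partial_{\vmu_i} L_{\rm NP}$, while each $\vx \in \data$ contributes $-\partial_{\vmu_i}f(\vx)$. Applying Lemma~\ref{lemma:radialinfluence}, both contributions are scalar multiples of $\vr_i = \vx - \vmu_i$, so the per-point gradient descent update $\Delta\vmu_i$ is of the form $(\text{scalar}) \cdot \vr_i$. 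The update is radially attractive toward $\vx$ iff this scalar is positive and radially repulsive iff it is negative.

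Next I would do a two-case split on the label $y$. For $y=1$ (i.e., $\vx\in\data$, $2y-1=+1$), the update scalar is
\begin{equation}
+\eta \cdot \frac{1}{\sigma^{2}}\,p^{i}[\vk_{\vmu}^{\sigma}](\vx)\,A_i(\vx),\notag
\end{equation}
whose sign is $\operatorname{sgn}(A_i(\vx))$. For $y=0$ (i.e., $\vx\in\reference$, $2y-1=-1$), the scalar is
\begin{equation}
-\eta\, w_\reference\, e^{f(\vx)}\cdot\frac{1}{\sigma^{2}}\,p^{i}[\vk_{\vmu}^{\sigma}](\vx)\,A_i(\vx),\notag
\end{equation}
whose sign is $-\operatorname{sgn}(A_i(\vx))$. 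In both cases, the sign of the scalar coincides with $\operatorname{sgn}\bigl(A_i(\vx)\,(2y-1)\bigr)$, which is exactly the attractive/repulsive classification claimed in the lemma.

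The only observation needed to close the argument is that all the prefactors multiplying $A_i(\vx)(2y-1)$, namely $\eta,\ w_\reference,\ e^{f(\vx)},\ \sigma^{-2}$, and $p^{i}[\vk_{\vmu}^{\sigma}](\vx)$, are strictly positive; the last of these is a softmax output, so $p^{i}\in(0,1)$. Hence these factors do not alter the sign of the radial displacement.

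I do not expect a real obstacle: once Lemma~\ref{lemma:radialinfluence} has reduced the per-point gradient to a scalar times $\vr_i$, the proof is a short sign-tracking exercise. The only mildly subtle point is recognizing that the factor $(2y-1)$ is precisely the device that absorbs the asymmetry between the $\reference$-term (entering with a $+$ in the loss, hence a minus in the update) and the $\data$-term (entering with a $-$ in the loss, hence a plus in the update), and that is what I would emphasize in the write-up.
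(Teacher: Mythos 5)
Your proof is correct and follows essentially the same route as the paper's: apply the radial-influence lemma to reduce each per-point contribution to a scalar multiple of $\vr_i$, track how the sign of that scalar flips between $\reference$-points and $\data$-points under the gradient-descent update, and observe that the remaining prefactors ($\eta$, $w_\reference$, $e^{f(\vx)}$, $\sigma^{-2}$, $p^i$) are all strictly positive and therefore sign-preserving. Your explicit two-case split on $y$ is a slightly more verbose but entirely faithful rendering of the paper's compressed observation that multiplying by $(2y-1)$ absorbs the sign asymmetry between the two samples.
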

\begin{proof}
    The dynamics of kernel means is given by Eq.~\ref{eq:grad-descent-mu}. For a generic parameter $\theta = \{\vmu_i, a_i \}$, the gradient of NP Loss (Eq.~\ref{eq:nploss}) is given as: 
    \begin{equation}\label{eq:nablaL}
      \partial_{\theta} L_{\rm NP}[f_{\theta}] = w_{\reference} \sum\limits_{\vx\in \reference} e^{f_{\theta}(\vx)} \partial_{\theta} f_{\theta}(\vx) - \sum\limits_{\vx\in \data}  \partial_{\theta} f_{\theta}(\vx),
\end{equation}
while the parameter update through gradient descent is along the negative gradient of the loss (Eq.~\ref{eq:grad-descent-mu}):
\begin{align}
    \Delta \theta = \theta_{t+1}-\theta_t \propto -\partial_\theta L_{NP}.
\end{align}
Therefore, for points $\vx \in \reference$, $\partial_{\theta} L_{\rm NP}[f_{\theta}]$ and $\partial_{\theta} f_{\theta}(\vx)$ have the same sign, while for $\vx \in \data$, $\partial_{\theta} L_{\rm NP}[f_{\theta}]$ and $\partial_{\theta} f_{\theta}(\vx)$ have the opposite sign. The overall effect of $\partial_\theta f$ on $\Delta \theta$ can be captured by multiplying $\partial_\theta f$ with the term $(2y-1)$ where $y=\begin{cases}
    0 & \vx\in \reference,\\
    1 & \vx\in \data
\end{cases}$, since this induces an overall sign flip for points $\vx \in \reference$. Additionally, the multiplier of the gradient, denoted as $A_i(\vx)$ above, can itself be positive or negative, and is determined by the kernel amplitudes, values and result of softmax weighting. Radially attractive behavior results when $\Delta \vmu_i \propto \vr_i = \vx-\vmu_i$, while repulsive behavior results when $\Delta \vmu_i \propto -\vr_i = \vmu_i - \vx$. The result for kernel location follows from the above using the gradient expression from Lemma \ref{lemma:radialinfluence}.\newline
\end{proof}

\subsection{Kernel dynamics are affected only by nearby data points.} Having studied the effect of individual data points on parameter updates, we now turn to the question of which points from the data significantly affect the dynamics of kernel means. We show that only those data points that are within a neighborhood of the kernel mean $\vmu_i$ can contribute to the dynamics of $\vmu_i$. Furthermore, this neighborhood is dependent on the scale $\sigma$. We first setup useful definitions:

\begin{definition}[Kernel radial $\alpha$-level]
Let $k:\mathbb{R}_{+}\to\mathbb{R}_{+}$ be a continuous, radial kernel profile that is nonincreasing in $r\ge0$ (i.e. $k(r_1)\ge k(r_2)$ for $r_1\le r_2$).  
For a given level $\alpha\in(0,k(0))$ define the \emph{radial $\alpha$-level} $r_\alpha$ as the radius such that $k(r) < \alpha \quad \forall r>r_\alpha$, 
or equivalently $r_\alpha \;=\; \sup\{ r\ge0 \;:\; k(r)\ge\alpha\}$.
\label{def:alphalevel}
\end{definition}

By monotonicity of $k$, the set $\{r\ge0: k(r)\ge\alpha\}$ is an interval $[0,r_\alpha]$ (possibly closed or half-open), so $r_\alpha$ is well defined.

For the Gaussian profile $k(r)=\exp\!\big(-r^2/(2\sigma^2)\big),$
the level-set quantile admits the closed form
$r_\alpha = \sigma\sqrt{-2\ln\alpha}$,
valid for $0<\alpha<1$.

Using a kernel's radial $\alpha$-level, we show that each kernel has a \textit{sphere of influence} which characterizes the region in space where data points can significantly affect the dynamics of kernel mean $\mu_i$ and amplitude $a_i$.
\begin{lemma}[Kernel's Sphere of Influence] 
For each kernel $i$ there exists a value $\alpha \in [0,1]$ such that the associated ball $S_{i}^\sigma (\alpha) = \{z \in \mathbb{R}^d: \|z-\vmu_i\|\leq r^\sigma_{\alpha,i}\}$ contains all the training points that contribute at least $\epsilon_\alpha$ to the kernel's dynamics, i.e., points outside this ball have no more than $\epsilon_\alpha$ effect on the kernel's dynamics:
\begin{equation}
    \exists\, \epsilon_\alpha>0 \quad s.t. \quad |\partial_{\theta_i}f(z)| \leq \epsilon_\alpha  \quad \forall z \notin S_i^{\sigma}(\alpha) \quad\text{with}\quad \theta_i \in [\vmu_i,\, a_i]
\end{equation}
where a small tail probability $\alpha\ll1$ implies a smaller gradient contribution outside the sphere of influence $\epsilon_\alpha\ll1$. We name the ball $S_i^\sigma (\alpha)$ the kernel's ``Sphere of Influence''.
The dynamics of kernels is thus local on a scale that is determined by the kernels' width, $\sigma$, through the radial $\alpha$-level $r_\alpha^\sigma$. 
\end{lemma}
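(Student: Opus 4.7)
The plan is to bound $|\partial_{a_i} f(\vx)|$ and $\|\partial_{\vmu_i} f(\vx)\|$ pointwise for $\vx \notin S_i^\sigma(\alpha)$, then take $\epsilon_\alpha$ to be the larger of the two resulting upper bounds. The amplitude case is immediate: since $\partial_{a_i} f(\vx) = p^i(\vx)\, k^\sigma_{\vmu_i}(\vx)$ and $p^i \in [0,1]$ as a softmax output, I get $|\partial_{a_i} f(\vx)| \leq k^\sigma_{\vmu_i}(\vx)$, and Definition~\ref{def:alphalevel} delivers $k^\sigma_{\vmu_i}(\vx) < \alpha$ outside $S_i^\sigma(\alpha)$, so the amplitude contribution is controlled by $\alpha$.

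The mean case is where the main obstacle lies. The expression $\partial_{\vmu_i} f(\vx) = A_i(\vx)\,\vr_i$ from Lemma~\ref{lemma:radialinfluence} carries an explicit $\|\vr_i\|$ that grows unbounded as $\vx$ moves away from $\vmu_i$, so I need to show that the softmax and kernel factors inside $A_i$ decay fast enough to absorb it. I would substitute $f(\vx) = \sum_j a_j\, p^j\, k^\sigma_{\vmu_j}(\vx)$ in $A_i$ and split
\begin{equation*}
\|A_i(\vx)\,\vr_i\| \;\leq\; \tfrac{2|a_i|}{\sigma^2}\, p^i\, k^\sigma_{\vmu_i}(\vx)\,\|\vr_i\| \;+\; \tfrac{1}{\sigma^2}\, p^i \sum_j |a_j|\, p^j\, k^\sigma_{\vmu_j}(\vx)\,\|\vr_i\|.
\end{equation*}
The key algebraic identity driving the proof is $p^i(\vx)\, k^\sigma_{\vmu_j}(\vx) = k^\sigma_{\vmu_i}(\vx)\, k^\sigma_{\vmu_j}(\vx)/\sum_l k^\sigma_{\vmu_l}(\vx) \leq k^\sigma_{\vmu_i}(\vx)$, valid for every $j$ because the denominator dominates $k^\sigma_{\vmu_j}$. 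Applied termwise, together with $\sum_j p^j = 1$, this collapses both pieces to a uniform estimate of the form $\|A_i(\vx)\,\vr_i\| \leq C(A_\infty,\sigma)\, k^\sigma_{\vmu_i}(\vx)\,\|\vr_i\|$ with $A_\infty = \max_j |a_j|$, reducing the whole problem to bounding the scalar product $k^\sigma_{\vmu_i}(\vx)\,\|\vr_i\|$.

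To close the argument I would use the elementary fact that $r \mapsto r \exp(-r^2/2\sigma^2)$ peaks at $r = \sigma$ with value $\sigma e^{-1/2}$ and is monotonically decreasing past that point. For $\alpha < e^{-1/2}$ one has $r_\alpha^\sigma = \sigma\sqrt{-2\ln\alpha} > \sigma$, so any $\vx \notin S_i^\sigma(\alpha)$ satisfies $\|\vr_i\| > r_\alpha^\sigma$, and monotonicity yields $k^\sigma_{\vmu_i}(\vx)\,\|\vr_i\| \leq \alpha\,\sigma\sqrt{-2\ln\alpha}$. Setting $\epsilon_\alpha$ to the larger of $\alpha$ and $C(A_\infty,\sigma)\cdot\alpha\sqrt{-2\ln\alpha}$ then proves the lemma, and both quantities vanish as $\alpha\to 0$, matching the stated ``$\alpha \ll 1 \Rightarrow \epsilon_\alpha \ll 1$'' conclusion. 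The only delicate step is the softmax manipulation in the second paragraph; the rest is routine.
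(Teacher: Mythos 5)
Your proof is correct and in fact tighter than the paper's. The amplitude case is identical: $\partial_{a_i} f = p^i\, k^\sigma_{\vmu_i}$ with $p^i \le 1$ and $k^\sigma_{\vmu_i} < \alpha$ outside the ball gives $\epsilon_\alpha = \alpha$. The mean case is where you diverge — productively. The paper splits into two cases (whether $\vx$ falls inside some other kernel's sphere of influence), bounds only the scalar coefficient $A_i(\vx)$, and concludes that $A_i$ tends to zero linearly in $\alpha$; it never addresses the $\|\vr_i\|$ factor in $\partial_{\vmu_i} f = A_i \vr_i$, which grows without bound as $\vx$ recedes from $\vmu_i$, so as written its final step does not actually control $|\partial_{\vmu_i} f|$. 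You flag exactly this obstacle. Your identity $p^i\, k^\sigma_{\vmu_j} \le k^\sigma_{\vmu_i}$ (because the softmax denominator dominates any single $k_j$) collapses the paper's two-case analysis into a single uniform estimate $\|A_i \vr_i\| \le C(A_\infty,\sigma)\, k^\sigma_{\vmu_i}(\vx)\,\|\vr_i\|$, and the monotone decay of $r \mapsto r\,e^{-r^2/2\sigma^2}$ past $r=\sigma$ then yields the explicit rate $\epsilon_\alpha = O(\alpha\sqrt{-2\ln\alpha})$, which indeed vanishes as $\alpha \to 0$. The small restriction $\alpha < e^{-1/2}$ (so that $r_\alpha^\sigma$ lies past the peak) is harmless since the lemma only asserts existence of a suitable $\alpha$ and the relevant regime is $\alpha \ll 1$. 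In short, you take a genuinely different route that both simplifies the argument and closes a real gap in the paper's treatment of the radial factor.
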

\begin{proof}
(Kernel amplitude)
The gradient of $f$ with respect to $a_i$ is given by:
\begin{equation}
    \partial_{a_i} f(\vx) = p_i[\vk_\vmu^{\boldsymbol{\sigma}}]\cdot k^\sigma_{\mu_i}(\vr_i)
\end{equation}
Since $p_i[\vk_\vmu^{\boldsymbol{\sigma}}] \in [0,1]$ and for $\vx \notin S_i(\alpha)$ $k^\sigma_{\mu_i}(\vr_i)\in [0, \alpha]$, we can write the following bound:
\begin{equation}
    0 \le \partial_{a_i} f(\vx) \le \alpha.
\end{equation}
Choosing $\epsilon_\alpha = \alpha$ gives us the desired result for kernel amplitudes.
\newline(Kernel mean). Consider now the gradient of $f$ with respect to kernel mean $\vmu_i$ (from Lemma \ref{lemma:radialinfluence}):
    \begin{align}
    \partial_{\vmu_i} f(\vx) &= 
    A_i(\vx) \cdot \vr_i, \\
        A_i(\vx) &=\frac{1}{\sigma^2}\cdot \left[2\,a_i k_{\vmu_i}^\sigma(\vx) - f_M^\sigma(\vx)\right] \cdot p^i[\boldsymbol{k_{\vmu}^{\sigma}}](r_i).
    \end{align}
    The coefficient $A_i(\vx)$ determines the contribution of the point $\vx$ to the gradient $\partial_{\vmu_i}L_{NP}$ driving the dynamics of $\Delta \vmu_i$. Since $A_i(\vx)$ is the result of the product of three terms, we bound it by obtaining upper bounds on each term separately.\\ 
    Consider a point $\vx \notin S_i^{\sigma}(\alpha)$. \\
    The factor $\frac{1}{\sigma^2}$ does not depend on $\vx$ and is positive and bounded (design choice).\\
    The factor $\left[2\,a_i k_{\vmu_i}^\sigma(\vx) - f_M^\sigma(\vx)\right]$ depends on $k_{\vmu_i}^\sigma(\vx)$ and $f_M^\sigma(\vx)$.
    From the definition of the set $S_i^{\sigma}(\alpha)$,
    \begin{align}
        \vx \notin S_i^\sigma (\alpha) \implies \|\vx -\vmu_i\|> r_{\alpha,i}^{\sigma}.
    \end{align}
    Since $r_{\alpha,i}^{\sigma}$ is the radial $\alpha$-level (Def. \ref{def:alphalevel}), most of the volume of $k^\sigma_{\mu_i}$ is contained within the $\alpha$-level
    \begin{align}
        \|\vx -\vmu_i\|> r_{\alpha,i}^{\sigma} \implies k^\sigma_{\mu_i}(\vx) \leq \alpha.
    \end{align}
    To estimate the contribution of the terms
    $f_M^\sigma$ and $p_i[\boldsymbol{k_{\vmu}^{\sigma}}]$ which include all kernels, we consider whether the chosen point $\vx$ belongs to the sphere of influence of other kernels. Consider two mutually exhaustive cases:
    \begin{itemize}
        \item \textit{Case 1}. The point $\vx$ is outside the sphere of influence of all kernels: 
        \begin{equation}\label{eq:soi1}
            \vx \notin S_j^{\sigma}(\alpha) \quad \forall j
            \implies k_{\vmu_j}^{\sigma}(\vx) < \alpha
            \quad\forall j
        \end{equation}
        We can write the following decomposition of $f$:
        \begin{equation}
            f(x) = \sum_{j \in J^+} |a_j| \frac{k_j^2(\vx)}{\sum_{i \in J}  k_i(\vx)} - \sum_{j \in J^-} |a_j| \frac{k_j^2(\vx)}{\sum_{i \in J} k_i(\vx)}
        \end{equation}
        where $J^+$ and $J^-$ represent the subsets of indices in $J=(1, \dots, M)$  corresponding to positive and negative amplitudes respectively:
        \begin{align}
            J^+ &= \{j \in J \,|\, a_j>0\}\\
            J^- &= \{j \in J \,|\, a_j<0\}.\\
        \end{align}
        We can write the following bound on $f$:
        \begin{equation}
            - \sum_{j \in J^-} |a_j| \frac{k_j^2(\vx)}{\sum_{i \in J} k_i(\vx)} \le f \le \sum_{j \in J^+} |a_j| \frac{k_j^2(\vx)}{\sum_{i \in J}  k_i(\vx)}.
        \end{equation}
        Using Eq.~\ref{eq:soi1} and the fact that $p_j\le1 \quad \forall j \in J$ this bound can be further developed in
        \begin{equation}\label{eq:soi2}
            - \alpha\sum_{j \in J^-} |a_j| \le f(\vx) \le \alpha\sum_{j \in J^+} |a_j|.
        \end{equation}
        Eq.~\ref{eq:soi2} can be used to bound the expression $[2a_ik_i(\vx)-f(\vx)]$:
        \begin{equation}
            2a_ik_i(\vx) - \alpha\sum_{j \in J^+} |a_j| \le 2a_ik_i(\vx) - f(\vx) \le 2a_ik_i(\vx) + \alpha\sum_{j \in J^-} |a_j|.
        \end{equation}
        and because $k_i(\vx)<\alpha$ we can further develop into two cases:
        \begin{align}
            - \alpha\sum_{j \in J^+} |a_j| \le 2a_ik_i(\vx) - f(\vx) \le \alpha(2|a_i| + \sum_{j \in J^-} |a_j| )& \quad\text{if}\quad a_i>0\\
            - \alpha(-2|a_i|+ \sum_{j \in J^+} |a_j|)\le 2a_ik_i(\vx) - f(\vx) \le \alpha\sum_{j \in J^-} |a_j| & \quad\text{if}\quad a_i<0
        \end{align}
        Since $p_i\leq 1$ is bounded, this results suffices to say that $A_i$ tends to zero linearly with $\alpha$.
        \item \textit{Case 2}. The point $\vx$ lies in the sphere of influence of at least one other kernel $j \neq i$.\\ 
        In this case we can decompose $f$ as follows:
        \begin{equation}
            f(\vx)= a_j \frac{k_j^2(\vx)}{\sum_{i \in J}  k_i(\vx)} + \sum_{k\neq j,\,k \in J^+,} |a_k| \frac{k_k^2(\vx)}{\sum_{i \in J}  k_i(\vx)} - \sum_{k\neq j,\,k \in J^-} |a_k| \frac{k_k^2(\vx)}{\sum_{i \in J} k_i(\vx)}
        \end{equation}
        Using the arguments in Case 1 for all $k\neq j$ we can write the following bound:
        \begin{equation}
            2a_ik_i(\vx) - \alpha\sum_{k\neq j,\,k \in J^+} |a_k| \le 2a_ik_i(\vx) - f(\vx) + a_j \frac{k_j^2}{\sum_{i \in J}  k_i} \le 2a_ik_i(\vx) + \alpha\sum_{k\neq j,\,k \in J^-} |a_k|.
        \end{equation}
        In this case, we can distinguish between four cases depending on $a_i$ and $a_j$ signs:
        \begin{align}
            -|a_j| \frac{k_j^2(\vx)}{\sum_{i \in J}  k_i(\vx)} - \alpha\sum_{k\neq j,\,k \in J^+} |a_k| \le 2a_ik_i(\vx) - f(\vx) \le \alpha\left(2|a_i| + \sum_{k\neq j,\,k \in J^-} |a_k| \right)& \quad\text{if}\quad a_i, a_j>0\\
            - \alpha\sum_{k\neq j,\,k \in J^+} |a_k| \le 2a_ik_i(\vx) - f(\vx) \le \alpha\left(2|a_i| + \sum_{k\neq j,\,k \in J^-} |a_k| \right)+|a_j| \frac{k_j^2(\vx)}{\sum_{i \in J}  k_i(\vx)} & \quad\text{if}\quad a_i>0, a_j<0 \\
            -|a_j| \frac{k_j^2(\vx)}{\sum_{i \in J}  k_i(\vx)} - \alpha\left(-2|a_i|+ \sum_{k\neq j,\,k \in J^+} |a_k|\right)\le 2a_ik_i(\vx) - f(\vx) \le \alpha\sum_{k\neq j,\,k \in J^-} |a_k| & \quad\text{if}\quad a_i<0, a_j>0 \\
            - \alpha\left(\sum_{k\neq j,\,k \in J^+} |a_k|-2|a_i|\right)\le 2a_ik_i(\vx) - f(\vx) \le \alpha\sum_{k\neq j,\,k \in J^-} |a_k|+|a_j| \frac{k_j^2(\vx)}{\sum_{i \in J}  k_i(\vx)}  & \quad\text{if}\quad a_i, a_j<0.
        \end{align}
        Due to the term $|a_j| \frac{k_j^2}{\sum_{i \in J}  k_i}$, the quantity $2a_ik_i(\vx) - f(\vx)$ remains finite as $\alpha$ tends to zero:
        \begin{equation}
            -|a_j| \frac{k_j^2(\vx)}{\sum_{i \in J}  k_i(\vx)}\le 2a_ik_i(\vx) - f(\vx) \le |a_j| \frac{k_j^2(\vx)}{\sum_{i \in J}  k_i(\vx)} \quad \text{for} \quad \alpha\ll1 .
        \end{equation}
        We now study the SoftMax activation $p_i(\vx)=\frac{k_i(\vx)}{\sum_{k\in J}k_k(\vx)}$, contributing to $A_i(\vx)$.
        The numerator, $k_i(\vx)$, is less than $\alpha$ since $\vx\notin S_i(\alpha)$.
        The denominator can be decomposed as follows:
        \begin{equation}
            \sum_{k\in J}k_k(\vx) = \sum_{k\neq j,\, k\in J}k_k(\vx)+k_j(\vx).
        \end{equation}
        Since $k_k(\vx)>0$ by definition and $k_j(\vx)>\alpha$ for $\vx \in S_j(\alpha)$, we can write the following lower bound for the denominator:
                \begin{equation}
            \sum_{k\in J}k_k(\vx) = \sum_{k\neq j,\, k\in J}k_k(\vx)+k_j(\vx)> k_j(\vx).
        \end{equation}
        Which translates in the following upper bound for $p_i(\vx)$:
        \begin{equation}
            0<p_i(\vx) < \frac{\alpha}{k_j(\vx)}.
        \end{equation}
        All together:
        \begin{align}
            &-|a_j| \frac{k_j^2(\vx)}{\sum_{i \in J}  k_i(\vx)} \cdot \frac{\alpha}{k_j(\vx)} \le \big(2a_ik_i(\vx) - f(\vx)\big)\cdot p_i(\vx) \le |a_j| \frac{k_j^2(\vx)}{\sum_{i \in J}  k_i(\vx)}\cdot \frac{\alpha}{k_j(\vx)}\\
            \implies&-|a_j| \frac{k_j(\vx)}{\sum_{i \in J}  k_i(\vx)} \cdot \alpha \le \big(2a_ik_i(\vx) - f(\vx)\big)\cdot p_i(\vx) \le |a_j| \frac{k_j(\vx)}{\sum_{i \in J}  k_i(\vx)}\cdot \alpha
        \end{align}
        which tends to zeros linearly with $\alpha$.
    \end{itemize}
    Thus, in either case, the gradient $\partial_{\vmu_i}f$ is small if $\vx \notin S_i^{\sigma}(\alpha)$ and $\alpha\ll1$, which completes the proof. 
\end{proof}

Therefore, only training points within the sphere of influence of a kernel significantly affect the dynamics of its mean. We induce localization of kernels by scale annealing, i.e., we gradually decrease the value of the scale $\sigma$ so that kernels become localized, which is formalized as follows:
\begin{lemma}[Scale annealing leads to localization] Monotonically annealing the kernels scale $\sigma$, shrinks the size of the sphere of influence: 
$$\sigma_t < \sigma_{t-1} \implies \text{Vol}\; S^{\sigma_t}_i(\alpha) < \text{Vol}\; S^{\sigma_{t-1}}_i (\alpha),$$
resulting in a decrease in the number of data points that influence the dynamics of each kernel $i$.
\end{lemma}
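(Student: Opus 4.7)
The plan is to reduce the claim to a direct monotonicity statement on the radial $\alpha$-level $r_\alpha^\sigma$, since the sphere of influence is by definition a Euclidean ball of radius $r_\alpha^\sigma$ centered at $\vmu_i$, and its volume is therefore a strictly increasing function of that radius.

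First, I would invoke the closed-form expression for the Gaussian kernel radial $\alpha$-level already derived just after Definition~\ref{def:alphalevel}, namely
\begin{equation}
r_\alpha^\sigma \;=\; \sigma\,\sqrt{-2\ln\alpha},
\end{equation}
which is valid for any $\alpha\in(0,1)$. Since $\alpha\in(0,1)$ forces $-2\ln\alpha>0$, the factor $\sqrt{-2\ln\alpha}$ is a strictly positive constant depending only on $\alpha$, so $r_\alpha^\sigma$ is strictly increasing and linear in $\sigma$. In particular, $\sigma_t<\sigma_{t-1}$ immediately gives $r_\alpha^{\sigma_t}<r_\alpha^{\sigma_{t-1}}$.

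Next, I would use the fact that $S_i^\sigma(\alpha)$ is a Euclidean ball in $\mathbb{R}^d$ of radius $r_\alpha^\sigma$, so that
\begin{equation}
\mathrm{Vol}\bigl(S_i^\sigma(\alpha)\bigr) \;=\; \omega_d\,(r_\alpha^\sigma)^d \;=\; \omega_d\,\sigma^d\,(-2\ln\alpha)^{d/2},
\end{equation}
where $\omega_d$ denotes the volume of the unit ball in $\mathbb{R}^d$. The map $\sigma\mapsto\sigma^d$ is strictly increasing on $(0,\infty)$ for every $d\ge1$, so $\sigma_t<\sigma_{t-1}$ yields $\mathrm{Vol}(S_i^{\sigma_t}(\alpha))<\mathrm{Vol}(S_i^{\sigma_{t-1}}(\alpha))$, which is exactly the claimed inequality.

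Finally, to make the ``decrease in the number of data points that influence the dynamics'' statement precise, I would note that the influential points associated to kernel $i$ at scale $\sigma$ are the data samples lying in $S_i^\sigma(\alpha)$, and that $S_i^{\sigma_t}(\alpha)\subset S_i^{\sigma_{t-1}}(\alpha)$ follows from the same concentricity and the radius inequality above; hence the set of influencing points can only shrink under annealing. There is no real obstacle in this argument: the only subtlety is notational, namely making clear that the same $\alpha$ and the same center $\vmu_i$ are used at both scales so that the two balls are concentric and directly comparable, which is why the statement is phrased with a fixed $\alpha$.
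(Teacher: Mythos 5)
Your proof is correct and follows essentially the same route as the paper's: the paper notes that the sphere of influence is a Euclidean ball centered at $\vmu_i$ with radius proportional to $\sigma$ (which you make explicit via the closed form $r_\alpha^\sigma = \sigma\sqrt{-2\ln\alpha}$), and that decreasing $\sigma$ shrinks the ball. You simply spell out the volume formula and the concentric-inclusion step that the paper leaves as "trivial."
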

\begin{proof}
    The proof follows trivially from the sphere of influence being restricted to Euclidean balls $S_{i}^\sigma (\alpha)$ around $\vmu_i$ with a radius that is proportional to $\sigma$. Decreasing $\sigma$ leads to a shrinkage of the Euclidean balls, and therefore a shrinkage of the sphere of influence.
\end{proof}

As a consequence of annealing-induced localization, we empirically observe that kernel means in \textsc{SparKer} enter a regime where a single data point lies within the sphere of influence. This point is typically an anomalous data point. In this regime, kernel mean $\vmu_i$ converges to the data point:

\begin{theorem}[Convergence within sphere of influence]
    If $S_i^{\sigma_t}$ contains only one data point that corresponds to statistical fluctuations $\vx^*\in \data$, and kernel $i$ dominates other kernels such that the softmax output $p_i \approx 1$ (i.e., the vector is nearly one-hot), then $\vmu_i$ converges radially towards the anomaly, getting $\epsilon$-close to $\vx^*$ in $O(\log 1/\epsilon)$ further steps.
\end{theorem}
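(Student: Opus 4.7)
The plan is to reduce the gradient descent update on $\vmu_i$ to a one-dimensional contraction along the line connecting $\vmu_i$ to $\vx^*$, and then read off a geometric rate. First, I would invoke the Sphere of Influence lemma at scale $\sigma_t$ with a small level $\alpha$: every training point $\vx$ outside $S_i^{\sigma_t}(\alpha)$ contributes at most $\epsilon_\alpha$ to $\partial_{\vmu_i} f(\vx)$, so up to an $O(\epsilon_\alpha)$ error the gradient of $L_{\rm NP}$ in Eq.~\ref{eq:nablaL} is driven entirely by the single interior point $\vx^*\in\data$. Because $\vx^*\in\data$, its sign in the loss yields an attractive contribution (using the push-pull lemma with $y=1$ and $A_i(\vx^*)>0$, which I would argue below).

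Second, I would simplify $A_i(\vx^*)$ under the one-hot assumption $p_i\!\approx\!1$. With the SoftMax concentrated on kernel $i$, the ensemble satisfies $f_M^\sigma(\vx^*)\approx a_i\, k^\sigma_{\vmu_i}(\vx^*)$, so
\begin{equation}
A_i(\vx^*) \;\approx\; \frac{1}{\sigma_t^2}\bigl[\,2 a_i k^\sigma_{\vmu_i}(\vx^*) - a_i k^\sigma_{\vmu_i}(\vx^*)\bigr]\;=\;\frac{a_i\, k^\sigma_{\vmu_i}(\vx^*)}{\sigma_t^2}.
\end{equation}
Since $\vx^*$ is an over-density (a statistical fluctuation in $\data$), the preceding amplitude dynamics produce $a_i>0$, hence $A_i(\vx^*)>0$. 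Using Lemma~\ref{lemma:radialinfluence} for $\partial_{\vmu_i}f$ and the sign rule from Eq.~\ref{eq:grad-descent-mu}, the update reduces to
\begin{equation}
\vmu_{i;t+1}-\vx^* \;=\; \bigl(1-\eta\, A_i(\vx^*)\bigr)\,(\vmu_{i;t}-\vx^*) \;+\; O(\epsilon_\alpha),
\end{equation}
which is manifestly radial, since the correction vector is parallel to $\vr_i=\vx^*-\vmu_{i;t}$.

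Third, I would show this is a strict contraction. Let $r_t = \|\vmu_{i;t}-\vx^*\|$ and $\rho_t = 1-\eta A_i(\vx^*)$. For any learning rate satisfying $0<\eta<\sigma_t^2/a_i$ we have $0<\rho_t<1$; moreover $k^\sigma_{\vmu_i}(\vx^*)=\exp(-r_t^2/2\sigma_t^2)$ is bounded below on any interval $r_t\le r_0$, giving a uniform contraction factor $\rho_\star<1$. Iterating, $r_T \le \rho_\star^{\,T}\, r_0 + O(\epsilon_\alpha/(1-\rho_\star))$; choosing $\alpha$ so that the residual term is below $\epsilon/2$ and solving $\rho_\star^{\,T} r_0 \le \epsilon/2$ yields $T=O(\log(1/\epsilon))$ as claimed. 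Convergence is radial because every increment is a scalar multiple of $\vr_i$, so the direction is preserved.

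\paragraph{Main obstacle}
The step I expect to be most delicate is making rigorous the coupling between the shrinking radius $r_t$ and the value $k^\sigma_{\vmu_i}(\vx^*)$, which itself depends on $\vmu_i$. Naively $\rho_t$ is not a constant, and as $r_t\to 0$ the kernel value grows toward $1$, so the contraction accelerates rather than slows; this actually helps, but requires a careful monotonicity argument to certify a uniform lower bound on $1-\rho_t$ throughout the trajectory, rather than only at initialization. A secondary subtlety is justifying that the SoftMax remains $\approx$ one-hot along the trajectory: as $\vmu_i$ approaches $\vx^*$, $k_i(\vx^*)$ increases while neighbouring $k_j(\vx^*)$ stay bounded, so $p_i\to 1$ is self-reinforcing; I would formalize this by a simple induction showing $p_i^{(t)}$ is non-decreasing along the contraction, closing the argument.
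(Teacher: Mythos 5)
Your proof is correct and follows the same fundamental route as the paper's: reduce the gradient-descent update to a one-dimensional recurrence for $\delta_t=\|\vmu_{i;t}-\vx^*\|$, show the recurrence contracts geometrically, and read off the $O(\log(1/\epsilon))$ step count. The main technical difference lies in how you certify the contraction factor. The paper assumes $\delta_t/\sigma\ll1$ and Taylor-expands $\exp(-\delta_t^2/(2\sigma^2))$ to approximate the kernel value by $1$, yielding the explicit rate $(1-\eta/\sigma^2)^t$; you instead observe that $k^\sigma_{\vmu_i}(\vx^*)$ is uniformly bounded below on any ball $r_t\le r_0$, giving a (possibly smaller) uniform rate $\rho_\star<1$ without requiring $\delta_t\ll\sigma$. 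Your version is arguably more robust: the condition $\delta_t\ll\sigma$ is awkward to reconcile with the shrinking $\sigma$ that is needed in the first place to isolate a single point in the sphere of influence, whereas your lower-bound argument applies on the whole trajectory. You also keep the amplitude $a_i$ explicit where the paper silently drops it, and you flag — correctly — that both the sign of $a_i$ and the persistence of the one-hot SoftMax along the trajectory are implicit assumptions that deserve justification; the paper makes the same assumptions without comment. In short, same plan, but you have handled the two places where the paper's argument is informal with more care.
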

\begin{proof}
    From Lemma \ref{lemma:radialinfluence}, we know that the dynamics of $\vmu_i$ will be radially toward the point $\vx^*$. Therefore, the dynamics becomes one-dimensional, along the line joining $\vmu_i$ (initial value) and $\vx^*$. If the point $\vx^*\in \data$, and the probability vector is one hot, the gradient of NP loss can be simplified as:
    \begin{align}
        \partial_\mu L_{NP} &\approx -f_t(\vx^*)\left(\frac{\vx^*-\vmu_t}{\sigma^2} \right).
    \end{align}
Plugging this into the gradient descent equation gives us,
\begin{align}
    \vmu_{t+1} &\approx \vmu_t + \eta \; f_t(\vx^*)\left( \frac{\vx^*-\vmu_t}{\sigma^2} \right), \\
    \text{where } f_t (\vx^*) &= \exp \left( - \frac{\|\vx^*-\vmu_t\|^2}{2\sigma^2} \right).
\end{align}
Let $\delta_t = \|\vmu_t-\vx^* \|$. Subtracting $\vx^*$ from both sides of the above equation and writing everything in terms of $\delta_t$ results in:
\begin{align}
    \delta_{t+1} &\approx \delta_t - \eta \frac{\delta_t}{\sigma^2}  \exp \left( -\frac{\delta_t^2}{2 \sigma^2} \right) .
\end{align}
If $\delta_t/ \sigma \ll 1$ (which can be achieved by initializing $\sigma$ to a large value), the exponent can be approximated using the leading term in the Taylor series expansion as $\exp \left( -\frac{\delta_t^2}{2\sigma^2} \right) \approx 1 - \frac{\delta_t^2}{2\sigma^2} $, which gives us:
\begin{align}
    \delta_{t+1} &\approx \delta_t - \eta \frac{\delta_t}{\sigma^2} \left( 1 - \frac{\delta_t^2}{2\sigma^2} \right), \\
    &\approx \delta_t (1- \eta / \sigma^2),
\end{align}
by retaining only the first order term since $\delta_t/\sigma \ll1$. Iterating the above over time yields,
\begin{align}
    \delta_t \approx \delta_0 \left(1-\eta /\sigma^2 \right)^t.
\end{align}
This implies the following convergence result for $\delta_t$:
\begin{align}
        \text{for } \epsilon >0, \; \delta_t \leq \epsilon\; \forall t \geq \frac{\log (\delta_0/\epsilon)}{\log \left( \frac{1}{1-\eta /\sigma^2} \right)}.  
\end{align}
Since $\delta_t= \|\vmu_t - \vx^*\|$, the above result indicates that $\vmu_t$ converges $\epsilon-$close to $\vx^*$ in $O(\log 1/\epsilon)$ steps of gradient descent (starting from when $\vx^*$ is the only point in the sphere of influence of kernel $i$). This completes the proof.
\end{proof}

\section{Sensitivity across anomaly regimes}\label{subsec:toys}

In-distribution detection can hit different levels of difficulty based on the degree of separability between the anomalies and the reference data, and based on the fraction of signal present in the dataset. 
In regions of high density of the reference, small signal injections are hard to detect because the log-density-ratio tends to be very close to zero. Whereas, in the tails of the reference distribution the anomalous pattern has to compete with statistical fluctuations.

We explore the sensitivity of \textsc{SparKer}. as a function of the anomaly separability with a series of one- and two-dimensional toy experiments. In the experiments, a localized anomalous signal is injected at different levels of overlap with the reference, either in the bulk, the tail, or out of the distribution. The amount of signal injected is tuned to keep the detection hardness approximately constant in the various scenarios. In addition, we test \textsc{SparKer}. in on non-localized signals mimiking domain shifts, for which the inductive biases encoded in the model are not optimal.
The benchmarks are detailed in the following.
\subsection{1D toy experiments}\label{subapp:1d}
The first benchmark we consider is a one-dimensional toy model introduced in~\cite{DAgnolo:2018cun}.
The data distribution under the reference model is a unit scale exponential. 
The data under the alternative hypotheses are composed of a mixture of reference-like data and signal-like data.
To test the model ability under different separability regimes, we consider localized signals of Gaussian form with scale $0.16$ and different locations, as well as a signal manifesting as an excess in the tail proportional to $x^2e^{-x}$.

The number of expected events in the dataset ${\rm N(R)}= 2000$.
The reference sample $\reference$ is taken $10$ times larger than the expected number of events in the dataset ${\rm N(R)}$, $\Nreference=20\,000$. 

The details on the signal injected are given in Table~\ref{tab:1d-dataset}.
\begin{table}[h]
    \centering
    \begin{tabular}{llccc}
         &label&\textbf{$\mu$}&  \textbf{$\sigma$}& N(S)/N(R) \\
         \toprule
         \textbf{Gaussian bumps}
         &bulk   &  1.6  &  0.16& $4.5\cdot 10^{-2}$\\
         &tail   &  6.4  &  0.16& $5\cdot 10^{-3}$\\
         &extreme tail   &  9  &  0.16& $2.5\cdot 10^{-3}$\\
         \midrule
         \textbf{Tail excess}&excess &       &      & $4.5\cdot 0^{-2}$\\
         \bottomrule
    \end{tabular}
    \caption{Summary of the 1D signal benchmarks. }
    \label{tab:1d-dataset}
\end{table}

\subsection{2D mixture of Gaussians}\label{subapp:2d}
The second benchmark that we consider is inspired by~\cite{10.5555/3666122.3669406}. The ground truth is represented by a mixture of four 2D Gaussian models with covariance matrix $\Sigma = 4 \cdot\mathbb{I}^{2\times 2}$, located at the corner of a square of length 16.
We consider four anomalous scenarios. Two of them consist of a local new cluster appearing either in the bulk of the bottom left Gaussian or in the extreme tail of all of them (e.g. in the middle of the square identified by the four Gaussian models' locations). The others, inspired by~\cite{10.5555/3666122.3669406}, are a non-local distortions obtained varying the scale of the bottom left Gaussian. While the latter is a shape anomaly that doesn't modify the expected number of data points, the new anomalous clusters are an injection on top of a set of background-like data. To make the problem hard, we consider very low signal injections (1\% or lower).\\
\begin{table}[h]
    \centering
    \begin{tabular}{llccc}
         &label&\textbf{$\mu$}&  \textbf{$\sigma$}& N(S)/N(R)\\
         \toprule
         \textbf{Gaussian bumps}
         &bulk   &  (-6, -6)  &  1& $1\cdot 10^{-2}$\\
         &extreme tail   &  (0, 0)  &  1& $1\cdot 10^{-3}$\\
         \midrule
         \textbf{Scale distortion}  &smearing &    &   2.2   & \\
                                    &squeezing &   &   1.8   & \\
         \bottomrule
    \end{tabular}
    \caption{Summary of the 2D signal benchmarks. }
    \label{tab:2d-dataset}
\end{table}

Figure~\ref{fig:toy-results} shows the signal detection performances for each signal benchmark for a one-dimensional toy model where the reference distribution is exponential and a two-dimensional toy model resulting from the mixture of four Gaussians.
The metric used to evaluate the sensitivity is the test power at 5\% type I error, obtained calibrating empirically the test with anomaly-free experiments. We compare \textsc{SparKer}. with Falkon and MMD (the number of kernels defining each model is reported in the figure legend after the dash).
\begin{figure}[h]
    \centering
    \includegraphics[width=0.48\linewidth]{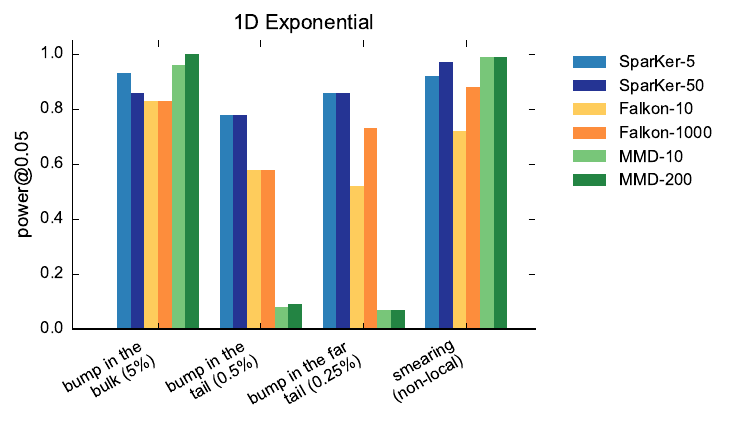}
    \includegraphics[width=0.48\linewidth]{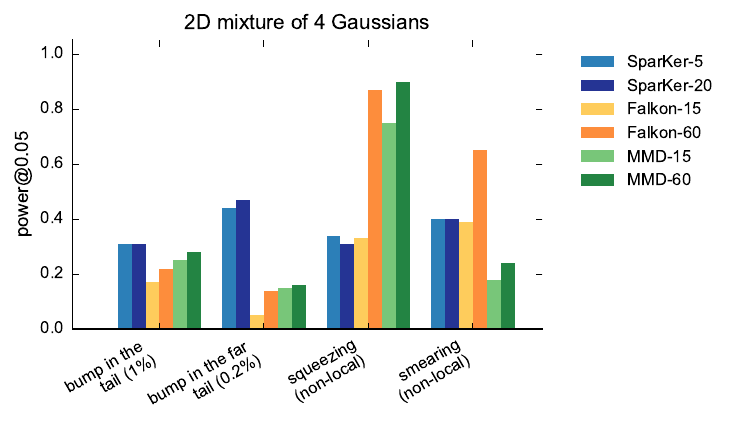}
    \caption{\textbf{Sensitivity comparison across regimes of anomaly separability.} Summary plot on sensitivity studies with toys models. \textsc{SparKer}. beats state-of-the-art algorithms on rare anomalies localized in-distribution. The numbers in the labels refer to the number of kernel used in the implementation.}
    \label{fig:toy-results}
\end{figure}

When the signal is located in low density regions of the reference distribution, the detection performances of both Falkon (orange bars) and MMD (green bars) are significantly lower than what obtained with \textsc{SparKer}. (blue bars). Moreover, the performance of \textsc{SparKer}. tends to be high even for very small number of trainable parameters, suggesting that the sparsity assumption combined with locality are a good set of biases for this kind of problems. Interestingly, the model performances are competitive even for the non localized signals.

\section{The role of the NP loss}
We compare the gradients of three loss functions commonly considered for likelihood-ratio or two-sample learning: the Neyman–Pearson loss (NP), binary cross-entropy loss (BCE), and mean-squared error loss (MSE). Writing $f(x)$ for the model logit and $\sigma(f) = 1 / (1 + e^{-f})$ for its logistic transformation, the losses and their gradients with respect to the model parameters are listed in Table~\ref{tab:losses}.

\begin{table}[h]
\centering
\begin{tabular}{p{1cm}p{12cm}}
\toprule
\multirow{2}{*}{\textbf{NP}} 
& $\displaystyle \mathcal{L}_{\mathrm{NP}}[f] = 
w_{\reference}\sum_{x \in \reference} \big(e^{f(x)} - 1\big)
- \sum_{x \in \mathcal{D}} f(x)$ \\[8pt]
& $\displaystyle \partial_\theta \mathcal{L}_{\mathrm{NP}}[f] =
w_{\reference}\sum_{x \in \reference} e^{f(x)}\,\partial_\theta f(x)
- \sum_{x \in \data} \partial_\theta f(x)$ \\
\midrule

\multirow{2}{*}{\textbf{BCE}} 
& $\displaystyle \mathcal{L}_{\mathrm{BCE}}[f] =
-w_{\reference}\sum_{x \in \reference} \log\!\big[1-\sigma(f(x))\big]
- \sum_{x \in \data} \log\!\big[\sigma(f(x))\big]$ \\[8pt]
& $\displaystyle \partial_\theta \mathcal{L}_{\mathrm{BCE}}[f] =
w_{\reference}\sum_{x \in \reference} \sigma(f(x))\,\partial_\theta f(x)
- \sum_{x \in \data} [1-\sigma(f(x))]\,\partial_\theta f(x)$ \\
\midrule

\multirow{2}{*}{\textbf{MSE}} 
& $\displaystyle \mathcal{L}_{\mathrm{MSE}}[f] =
w_{\reference}\sum_{x \in \reference} [\sigma(f(x))]^2
+ \sum_{x \in \data} [1-\sigma(f(x))]^2$ \\[8pt]
& $\displaystyle \partial_\theta \mathcal{L}_{\mathrm{MSE}}[f] =
w_{\reference}\sum_{x \in \reference} 2\,\sigma(f(x))^2[1-\sigma(f(x))]\,\partial_\theta f(x)
- \sum_{x \in \data} 2\,\sigma(f(x))[1-\sigma(f(x))]^2\,\partial_\theta f(x)$ \\
\bottomrule
\end{tabular}
\caption{\textbf{Comparison of loss functions and their gradients.} We study the Neyman-Pearson (NP) loss, the Binary Cross Entropy (BCE) loss and the Mean Squared Error (MSE) loss. Each loss defines a different optimization landscape and gradient behavior.}
\label{tab:losses}
\end{table}

The different loss definitions imply distinct gradient behaviors. 
The BCE and MSE losses depend on the function $f$ through its logistic transformation. Thus, for the chain rule, their gradients depend on the sigmoid as well.
In the MSE case, all the terms contributing to the gradient depend on the product $\sigma(f)\cdot[1 - \sigma(f)]$, producing rapid gradient decay as $\sigma(f)$ approaches either 0 or 1, independently on the point $x$ category. 
As for the BCE loss, contributions coming from the reference sample $\reference$ directly depend on the logistic factor $\sigma(f)$, whereas contributions coming from the data sample $\data$ depend on the term $[1 - \sigma(f)]$. This structure causes partial gradient saturation when $\sigma(f)$ approaches $1$ for points in $\data$ or $0$ for points in $\reference$. The result is a slower learning dynamic for confidently classified samples, though typically less severely than for MSE.

The NP loss does not depend on $f$ through the logistic transformation, 
thus avoiding vanishing gradients typical of sigmoid-based losses. However, the loss is unbounded from below and requires implicit or explicit regularization, such as the activations clipping used in this work, to prevent divergence.

Due to its non-vanishing gradients, the NP loss seems the most suitable candidate in settings that require accurate learning of the density-ratio.

We validate these theoretical observation over a one-dimensional controlled experiment. We consider the scenario of a peak in the extreme tail of an exponential distribution, described in Section~\ref{subapp:1d}.
Table~\ref{tab:loss} reports the detection power at 5\% type I error for \textsc{SparKer}. optimized using MSE, BCE or NP loss. As expected the MSE loss is the least effective choice and NP the most effective one.
\begin{table}[h]
    \centering
    \begin{tabular}{lccc}
    \toprule
    Model       & MSE loss & BCE loss & NP loss \\
    Power@5\%   & 0.61 & 0.68 & \textbf{0.86}\\
    \bottomrule
    \end{tabular}
    \caption{Caption}
    \label{tab:loss}
\end{table}

\section{The role of SoftMax activation}
In this section we further elaborate on the impact of SoftMax activation in the \textsc{SparKer} design. Figure~\ref{fig:2M} illustrates the impact of SoftMax for a one-dimensional model composed of two kernels, located at $\mu_1$ and $\mu_2$ respectively. The left panel shows the model design with (dashed orange line) and without (solid blue line) SoftMax. The middle and right panels display the partial derivatives of the model with respect to the kernels' locations $\mu_1$ and $\mu_2$, highlighting the implications of SoftMax on the learning dynamics affecting the kernels' locations. The presence of SoftMax activation causes a change of sign in the derivative with respect to one kernel for regions of the input space where the other kernel ``wins'' the competition, altering the direction of the kernel's motion during the learning dynamics. 
\begin{figure}[h]
\centering
\includegraphics[width=0.32\linewidth]{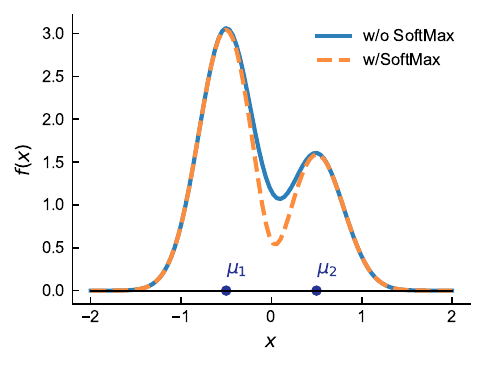}
\includegraphics[width=0.32\linewidth]{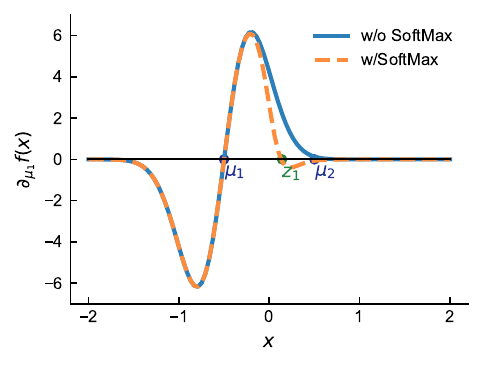}
\includegraphics[width=0.32\linewidth]{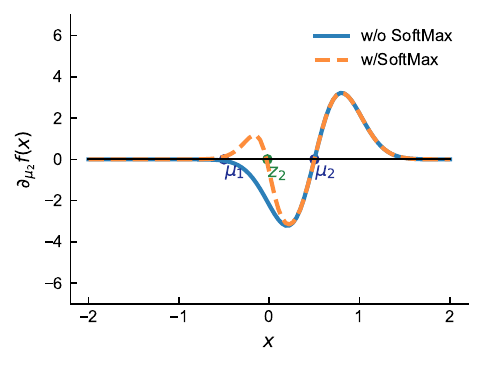}
\caption{\textbf{The effect of SoftMax on \textsc{SparKer}. model.} We compare \textsc{SparKer} in presence (dashed orange line) and in absence the (solid blue line) of SoftMax activation in a 1D scenario. The left panel shows an example of a 2-kernel model; the middle and right panels show the partial derivatives of the model with respect to the kernels' locations $\mu_1$ and $\mu_2$. The kernels interaction induced by SoftMax manifest as a change of sign in the derivative governing the direction of the kernels' motion during the learning dynamics.}\label{fig:2M}
\end{figure}
The implications of this phenomenon is an increased model sensitivity to rare anomalies. Table~\ref{tab:softmax} shows numerical results obtained comparing \textsc{SparKer} with and without SoftMax activation for the one-dimensional problem of detecting rare anomalies on the extreme tails of the nominal distribution, introduced in Section~\ref{subapp:1d}. SoftMax leads to a higher power of the Neyman–Pearson test statistic. This improvement is accompanied by a more uniform spatial allocation of kernels, as evidenced by the Moran spacing~\cite{cheng1989goodness} in Figure~\ref{fig:moran}. Together, these results indicate that SoftMax promotes a more efficient and homogeneous exploration of the input space, enhancing chances of discovery.
\begin{table}[h]
    \centering
    \begin{tabular}{lcc}
    \toprule
    Model       & w/o SoftMac & w/ SoftMax\\
    Power@5\%   & 0.72 & \textbf{0.86}\\
    \bottomrule
    \end{tabular}
    \caption{Caption}
    \label{tab:softmax}
\end{table}

\begin{figure}[h]
    \centering
    \includegraphics[width=0.5\linewidth]{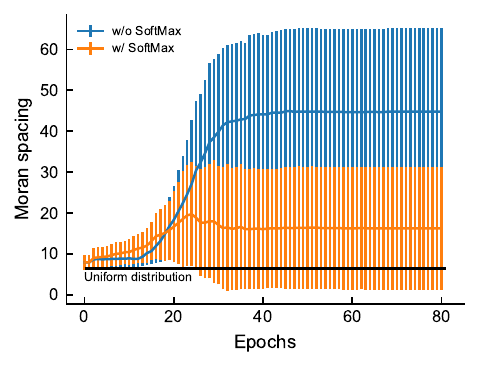}
    \caption{\textbf{SoftMax improves specialization.} We measure the distance of the kernels distribution from a uniform distribution computing the Moran spacing statistics as a function of the training epoch. The  Moran evolution for \textsc{SparKer}. in absence of SoftMax activation is reported in blue, whereas the trend in presence of SoftMax is reported in orange. The solid lines are obtained averaging over 100 anomaly-free realizations; error bars represent the standard deviation. We focus on the 1-dimensional problem of detecting an extreme peak in the tail, introduced in Section~\ref{subsec:toys}. The average value of the test statistics under uniform assumption is highlighted in black. Generally, the highest the value of Moran spacing the farther from uniformity assumption. On average including SoftMax in the model design leads to a more uniform distribution of kernels over the space.}
    \label{fig:moran}
\end{figure}

\section{Applications details}\label{sec:data} 
\subsection{Gravitational waves detection} 
Gravitational waves are ripples in spacetime generated by massive accelerating objects, such as merging black holes or neutron stars, and propagate at the speed of light. Detectors like LIGO~\cite{LIGOScientific:2014pky} directly measure these signals on Earth, revealing the properties of their sources and testing the nature of gravity. Many predicted sources remain unobserved, and the diversity of possible signals—often lacking precise theoretical models—motivates anomaly detection methods. In this work, we analyze data from the beginning of LIGO’s O3a run~\cite{O3a,O3b}, categorized into Gaussian noise, instrumental glitches, and six astrophysical or hypothetical source classes. Following the NSF HDR ML Challenge~\cite{campolongo2025building}, we include white-noise bursts (WNB) as anomalies. Our study uses the feature-extraction pipeline of~\cite{phlab} to provide inputs to the \textsc{SparKer} algorithm. A one-dimensional ResNet encoder setup as in~\cite{Marx:2024wjt} is fine-tuned using supervised contrastive learning on anomaly-free data to provide a four-dimensional compressed representation. For the presented experiments, we set the size of the reference sample at $20\,000$, and the size of the inspected data at $4\,000$. Signal injections range from 1 to 5\%.


\subsection{New particle jets discovery} 
In high energy physics, hadronic jets are sprays of particles originating from quarks or gluons produced in hadron collider experiments. Their substructure provides hints of the physics process underlying their generation, making them a valuable tool to search for new phenomena deviating from Standard Model of particle physics.
We apply \textsc{SparKer} to simulations if particle physics jets produced in proton-proton collisions at the Large Hadron Collider (LHC), available in the \textsc{JetClass} dataset~\cite{qu_2022_6619768,Qu:2022mxj}. 
Jets originating from quarks and gluons (QCD), top quark decays ($t \to bqq^\prime$), and vector boson decays ($V \to qq^\prime$) are considered as background classes, representing the nominal generation model. Following the numerical experiments proposed in~\cite{phlab}, we consider as a signal jets produced by the Higgs boson decaying to bottom-antibottom quark pairs. The encoder, introduced in~\cite{phlab}, has the Particle Transformer (ParT) model as backbone~\cite{Qu:2022mxj}, and an additional fully connected final layer to fine tune the embedding with supervised contrastive learning. For the experiments presented in the main paper, we set the size of the latent space at 128, the reference sample is $100\,000$ events, and the size of the inspected data Poisson fluctuates around $20\,000$. Signal events in this case are added to background events and range from 0.5 to 2\% of the background. Also in this case, the total number fluctuates according to a Poisson distribution around the chosen working point value\footnote{Poissonian fluctuations in the number of events in each class simulate real experimental conditions at colliders}.

\subsection{Hybrid butterfly discovery} Detecting hybrid species and assess their statistical significance at the population level is a crucial task to recognize the emergence of new species in evolutionary biology. We apply \textsc{SparKer} to the NSF HDR ML Challenge of identifying new hybrids samples in collections of butterfly images~\cite{campolongo2025building}. The data are a subset of the Heliconius Collection (Cambridge Butterfly) consisting of butterfly images and corresponding labels identifying their nature (either hybrids or not)~\cite{gabriela_montejo_kovacevich_2020_4289223, patricio_a_salazar_2020_4288311, montejo_kovacevich_2019_2677821, jiggins_2019_2682458, montejo_kovacevich_2019_2684906, warren_2019_2552371, warren_2019_2553977,montejo_kovacevich_2019_2686762, jiggins_2019_2549524, jiggins_2019_2550097, joana_i_meier_2020_4153502, montejo_kovacevich_2019_3082688, montejo_kovacevich_2019_2813153, salazar_2018_1748277, montejo_kovacevich_2019_2702457, salazar_2019_2548678, pinheiro_de_castro_2022_5561246, montejo_kovacevich_2019_2707828, montejo_kovacevich_2019_2714333, gabriela_montejo_kovacevich_2020_4291095, montejo_kovacevich_2021_5526257, warren_2019_2553501, salazar_2019_2735056, mattila_2019_2555086}.
For the experiments presented in the mian paper, we embed the data using the 1000-dimensional logits representation of a ResNet50 pretrained on ImageNet. Due the limited amount of available data, the reference sample only contains $1\,000$ points, and the size of the inspected data $100$. Signal injections range from 5 to 20\% of the background. 

\subsection{CIFAR-10 vs. CIFAR-5m}
In the experiments involving generative model validation using CIFAR-10 and CIFAR-5m datasets, we consider the CIFAR-10 test set, composed of 10000 data points, as data $\data$ and a 100000 size sample from CIFAR-5m as reference $\reference$. Both samples are processed using ResNet50 pretrained on ImageNet to extract a 1000D representation to run \textsc{SparKer} on.
When performing bootstrap over anomaly-free realizations to calibrate the test we randomly sample both the reference and the data from a set of 300000 CIFAR-5m points.

\subsection{OOD in ImageNet}
For the experiments involving out-of-domain detection in ImageNet, we consider data samples $\data$ of 1000 points and compared them with a reference sample $\reference$ of size 10000. The data are a mixture of ImageNet data points and novelties, taken from the Striped class of the Texture dataset. The reference set is fully drawn from the ImageNet dataset and represent the close-world assumption. Both samples are processed using ResNet50 pretrained on ImageNet to extract a 1000D representation to run \textsc{SparKer} on. 

\subsection{Implementation details}
In all applications considered in this paper we run a 5-kernel \textsc{SparKer}. The model initialization follows the indications provided in Section~\ref{app:init}. The number of training epochs, the L2 regularization coefficient and the amplitude clip are independently chosen for each application to ensure good statistical behavior of the test in anomaly-free condition (i.e. calibration settings). We summarize the hyperparameter choice in Table~\ref{tab:hyperpars}.
\begin{table*}[h!]
\centering
\begin{tabular}{lccc}
\toprule
\textbf{Application} & $\boldsymbol{n_{\rm steps}}$ & \textbf{L2 coeff.} & \textbf{Amplitude clip} \\
\midrule
Gravitational waves detection&100k& 1& 100\\
New particle jets discovery& 100k & $10^{-3}$ &100\\
Hybrid butterfly discovery&100k & $10^{-4}$ & 200\\
intrusion detection in Dante's text& 100k & $10^{-3}$ & 500\\
OOD in ImageNet& 40k & 0 & 10k\\
CIFAR-10 vs. CIFAR-5m& 40k & 0 & 10k\\
\bottomrule
\end{tabular}
\caption{\textbf{\textsc{SparKer} training details.} Number of training epochs, L2 regularization coefficient and amplitude clipping for various applications considered in the paper.}
\label{tab:hyperpars}
\end{table*}
add the comparison with other loss functions and gradient calculations to motivate the loss design choice
\section{Impact of data dimensionality}
We study the impact of the data dimensionality on the detection performances, focusing on the problem of particle jets discovery. As detailed in Section~\ref{sec:data}, the feature extraction employed in this application results from fine-tuning a transformer-based model tailored to analyze particle jets. We prepare different data representation scanning the size of the latent representation while keeping all other aspects of the training the same. Figure~\ref{fig:dimscan_lhcjets} shows the detection power at 5\% false positive rate as a function of the size of the latent space for \textsc{SparKer} and a selection of baselines. Beside MMD and Falkon, we implemented a distance-based test relying on the Mahalanobis metric~\cite{miyai2024generalized}, and the Fr\'echet Distance (FD), assuming normal distribution of the two samples~\cite{dowson1982frechet}.
Our experiments show stable performances for all tests except the Mahalanobis-based, which looses power as the number of dimensions increases. Falkon and \textsc{SparKer} obtain the highest performances. 
\begin{figure}[h]
    \centering
    \includegraphics[width=0.5\linewidth]{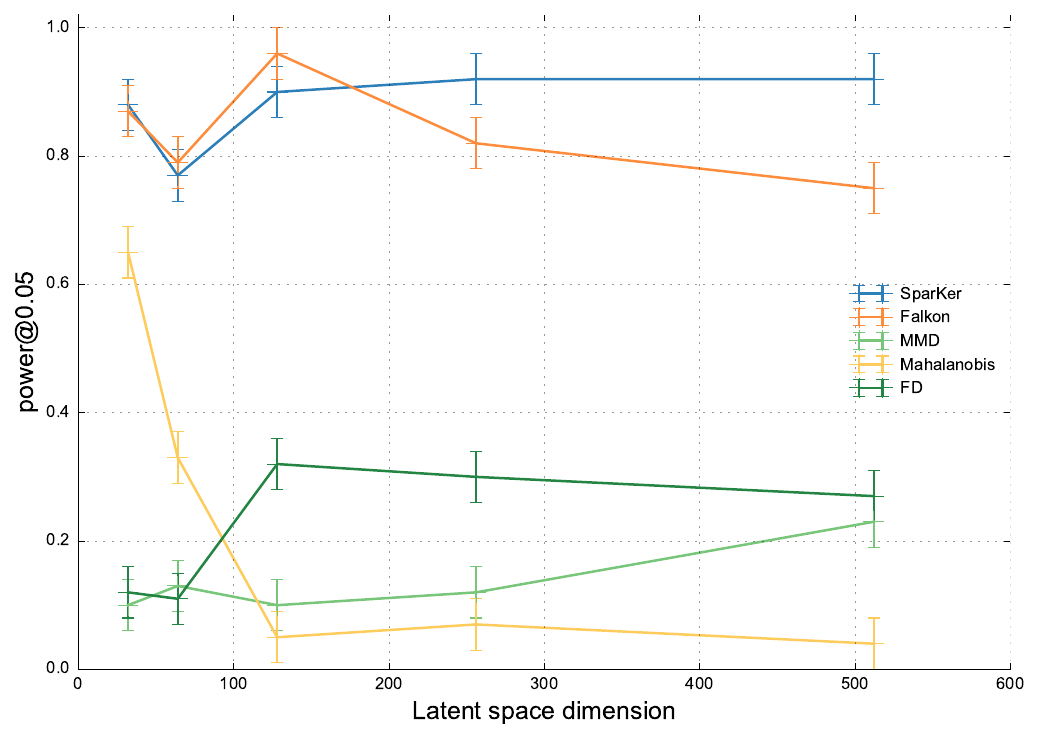}
    \caption{\textbf{Impact of latent space size in particle jets discovery.} Detection power at 5\% type I error as a function of latent space size. We focus on a 1\% anomaly fraction. Each color represent a different detection method. The power and type I error are estimated using 100 realizations in anomaly conditions and 300 realizations anomaly-free. Error bars represent the Clopper-Person confidence interval. All methods except Mahalanobis have stable performances across dimension scan. The NP test, in both \textsc{SparKer} and Falkon implementations, consistently provides the highest power.}
    \label{fig:dimscan_lhcjets}
\end{figure}
\section{Additional plots on interpretability}
In this Section we report additional interpretability studies for the applications presented in the main paper.
\paragraph{Interpreting discovered species of butterflies.}
The left side of Figure~\ref{fig:app_interp} showcases the result of a hybrid butterfly detection task. Panel (a) shows the distribution of the anomaly score across the inspected data in black compared to the average distribution in absence of signal in light blue (error bars represent the standard deviation over 100 anomaly-free realizations). \textsc{SparKer} identifies three kernel components ($f_1$, $f_2$, and $f_4$) that are strongly activated by the most anomalous samples, two of which are positively correlated with the anomaly score (see panel (b)). By projecting data onto the subspace spanned by these two components, distinct regions of anomalous butterflies emerge—corresponding to specific phenotypic variations. For instance, butterflies in the upper-left region display two bright white spots on the upper wings, whereas those in the lower-right exhibit only one and an overall softer coloration. 

\paragraph{Interpreting statistical deviations in CIFAR AI generation.}
The outcome of the CIFAR generative model validation task is summarized on the right side of Figure~\ref{fig:app_interp}. \textsc{SparKer} reveals a consistent excess in the right tail of the anomaly score distribution reported in panel (d), indicating that the reference model under-represents certain regions of data space. The analysis of component activations highlights that all kernels are relevant, with $f_1$, $f_2$, and $f_3$ positively correlated with the anomaly score. Visualizing the three-dimensional space spanned by these components highlights distinct and nearly orthogonal anomaly clusters, which correspond to ImageNet categories such as “rule,” “pechines,” and “crossword puzzle.” Together, these results illustrate how \textsc{SparKer} enables geometric and semantic interpretability across distinct application domains.

Anomalous butterflies laying at the edges of the distribution show different phenotypes: on the top left corner butterflies have two white stains on the top wings; on the bottom right corner butterflies only have one white stain and overall milder coloring.
\begin{figure}[h]
    \centering
    \includegraphics[width=\linewidth]{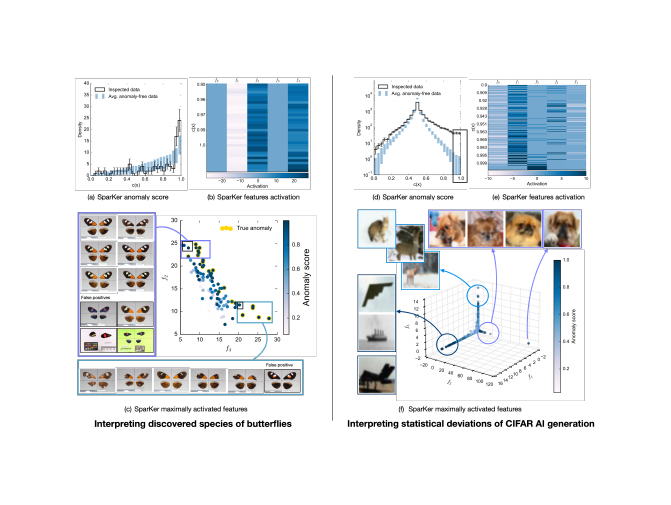}
    \caption{\textbf{Extracting geometric interpretation from \textsc{SparKer}}. The left side of the figure shows an example of data analysis in the hybrid butterfly detection problem. From panel (b) we identify two \textsc{SparKer} components ($f_2$ and $f_4$) highly activated by the most anomalous points in the dataset. In panel (c) we report the correlation plot between such components, color coded by the anomaly score, and we distinguish groups of correctly identified hybrids according to their location in this plane. The right side of the figure shows the results of our experiments on the CIFAR generative model validation problem. Panel (d) shows a consistent excess in the right tail of the anomaly score distribution, telling that the generative model (used as the reference) fails to adequately represent a subspace of the data. More precisely, it under-represent it. Panel (e) informs us on the main regions of model misspecification; 
    three components ($f_1$, $f_2$ and $f_3$) highly activate to the anomalous points. Panel (f) shows a 3D scatter plot over these components, color coded by anomaly score, and identify three orthogonal subgroups of anomalies, aligning with the ImageNet classes of ``rule'' ($f_1$), ``pechinese'' ($f_2$), and ``crossword puzzle'' ($f_3$).}
    \label{fig:app_interp}
\end{figure}

\section{Computing resources}
The training of \textsc{SparKer} is performed using code optimized for GPUs usage. Memory and time requirements depends on the size of the training set ($N$), the number of kernels ($M$), the size of the data space ($d$), and the number of iterations ($n_{\rm steps}$). We summarize the average training time of our experiments in table~\ref{tab:time}. Errors correspond to standard deviation across 100 replicas.
\begin{table}[h]
    \centering
    \begin{tabular}{lccccc}
        \textbf{Application} & $\boldsymbol{N}$ & $\boldsymbol{M}$ & $\boldsymbol{d}$ & $\boldsymbol{n_{\rm steps}}$ & \textbf{time (s)}\\
        \toprule
         Gravitational waves    &  $24 \cdot 10^{3}$&  5&  4      &  $10^{5}$       & $125 \pm 1$\\
         Particle Jets          &  $60 \cdot 10^{3}$&  5&  128    &  $10^{5}$       & $350.1\pm 0.8$\\
         Butterfly Hybrids      &  $1.1 \cdot 10^{3}$& 5&  1000   &  $10^{5}$       & $1589 \pm 19$\\
         Divine Comedy          &  $15 \cdot 10^{3}$&  5&  384    &  $10^{5}$       & $269 \pm 0.9$\\
         ImageNet OOD           &  $11 \cdot 10^{3}$&  5&  1000   &  $4\cdot10^{4}$ & $3455 \pm 125$\\
         CIFAR-10 vs. CIFAR-5m  &  $110 \cdot 10^{3}$& 5&  1000   &  $4\cdot10^{4}$ & $33576 \pm 294$\\
         \bottomrule
    \end{tabular}
    \caption{\textbf{Summary table of computing time on GPU.} We report the average computing time of a single \textsc{SparKer} training (in seconds) for the various applications considered in this paper. For each applications we list the relevant parameters affecting the training time: size of training set ($N$), number of\textsc{SparKer}'s components ($M$), number of dimensions of the data representation ($d$) and number of iterations ($n_{\rm steps}$). }
    \label{tab:time}
\end{table}

\newpage
\bibliographystyle{ieeetr}
\bibliography{bibliography}
\end{document}